\documentclass[11pt,reqno]{amsart}
\usepackage[top=1.2in, bottom=1.2in, left=1.3in, right=1.3in]{geometry}
\usepackage{amsfonts, amssymb, amsmath, enumerate, bm, xspace}
\usepackage{setspace}

\usepackage{epsfig,subfigure}
\graphicspath{{figures/}}
\usepackage{epstopdf}
\usepackage{caption}
\usepackage{bbm,mathtools}
\usepackage{floatrow,pdfsync} %
%\usepackage{showkeys}

%% For description cross-reference
%\newcounter{desccount}
%\newcommand{\descitem}[1]{%
%  \item[#1] \refstepcounter{desccount}\label{#1}}
%\newcommand{\descref}[1]{\hyperref[#1]{#1}}

\numberwithin{equation}{section}

%%%%%%%%%%%%%%%%%%%%%%%%%%%%%%%%%%%%%%%%%%%%%%

\DeclareMathOperator{\E}{\mathbb{E}}

\newcommand{\ip}[2]{\langle#1,#2\rangle}

\def \C {\mathbb{C}}

\def \P {\mathbb{P}}
\def \R {\mathbb{R}}

\def \l {\lambda}

%%%%%%%%%%%%%%%%%%%%%%%%%%%%%%%%%%%%%%%%%%%%%

\newtheorem{theorem}{Theorem}[section]

\newtheorem{lemma}[theorem]{Lemma}

\theoremstyle{remark}

\newtheorem{assumption}{Assumption}[section]

%%%%%%%%%%%%%%%%%%%%%%%%%%%%%%%%%%%%%%%%%%%%%

\begin{document}

\title[]{Estimating the number of communities by spectral methods}

\author{Can M. Le}
\address{Department of Statistics, University of California, Davis, CA 95616}
\email{canle@ucdavis.edu}
\author{Elizaveta Levina}
\address{Department of Statistics, University of Michigan, Ann Arbor, MI 48109}
\email{elevina@umich.edu}

\thanks{}

\begin{abstract}
Community detection is a fundamental problem in network analysis with many methods available to estimate communities.
Most of these methods assume that the number of communities is known, which is often not the case in practice.
We study a simple and very fast method for estimating the number of communities
based on the spectral properties of certain graph operators, such as the non-backtracking matrix and the Bethe Hessian matrix.
We show that the method performs well under several models and a wide range of parameters, and is guaranteed to be consistent under several asymptotic regimes.
We compare this method to several existing methods for estimating the number of communities and show that it is both more accurate and more computationally efficient.
\end{abstract}

\maketitle

\section{Introduction}
The problem of clustering similar objects into groups is a fundamental problem in data analysis.
In network analysis, it is known as community detection (\cite{Newman&Girvan2004,Airoldi2008,Bickel&Chen2009,amini2013pseudo}).
Given a network, which consists of a set of nodes and a set of edges between them, the goal of community detection is to
cluster the nodes into groups (communities) so that nodes in the same community share a similar connectivity.

One of the simplest ways of modeling a community structure is the stochastic block model (SBM), proposed by \cite{Holland83}.
Given the number of communities $K$, $n$ node labels $c_i$ are drawn independently from a multinomial distribution with parameter $\pi = (\pi_1,...,\pi_K)$.
The edges between pairs of nodes $(i,j)$ are then drawn independently from a Bernoulli distribution with parameter $P_{c_ic_j}$ and collected in the $n \times n$ adjacency matrix $A$, with $A_{ij} = 1$ if nodes $i$ and $j$ are connected by an edge, and 0 otherwise.
A limitation of the stochastic block model is that all nodes in the same communities are equivalent and follow the same degree distribution, whereas many real networks contain a small number of high-degree nodes, the so called hubs.
To address this limitation, \cite{Karrer10} proposed the degree-corrected stochastic block model (DCSBM).
It  assigns a degree parameter $\theta_i$ to each node $i$, and edges between nodes are drawn independently with probabilities $\theta_i\theta_j P_{c_ic_j}$.
The community detection task is to recover the labels $c_i$ given the adjacency matrix $A$.

A large number of methods have been proposed for finding the underlying community structure
(\cite{McS01,NewmanPNAS,Airoldi2008,Bickel&Chen2009,Rohe2011,Chaudhuri&Chung&Tsiatas2012,amini2013pseudo,
Krzakala.et.al2013spectral,Vu2014,Mossel&Neeman&Sly2014a,Saade&Krzakala&Zdeborova2014}).
Most of these methods require the number of communities $K$ as input, but in practice $K$ is often unknown.
%The goal of this paper is to develop a fast and reliable method for estimating $K$.
To address this problem, a few likelihood-based methods have been proposed to estimate $K$ under either the SBM or the DCSBM
(\cite{Daudinetal2008,Latouche&Birmelel&Ambroise2012,Peixoto2013,Saldana&Yu&Feng2014,Wang&Bickel2015}).  These methods use BIC-type criteria for choosing the number of communities from a set of possible values, which requires computing the likelihood, done using either MCMC or the variational method, which are both computationally very challenging for large networks.
A different approach based on the distribution of leading eigenvalues of an appropriately scaled version of the adjacency matrix was proposed by \cite{Bickel&Sarkar2013,Lei.goodness.of.fit2014}.   Under the SBM, distributions of the leading eigenvalues converge to the Tracy-Widom distribution;
this fact is used to determine $K$ through a sequence of hypothesis tests.   Since the rate of convergence is slow for relatively sparse networks, a bootstrap correction procedure was employed, which also leads to a high computational cost.
Cross-validation approaches were proposed by \cite{Chen&Lei2014} and \cite{Li&Levina&Zhu.NCV2016}.  While they have good properties under the SBM and the DCSBM, they require estimating communities on many random network splits, and are computationally costly.  

To the best of our knowledge, all existing methods are either restricted to a specific model or computationally intensive.
%In this paper we propose a different method that uses spectral properties of either the Bethe Hessian or the non-backtracking matrices.
In this paper we study a fast and reliable method that uses spectral properties of either the Bethe Hessian or the non-backtracking matrices.
Under a simple SBM in the sparse regime, these matrices have been used to recover the community structure
(\cite{Krzakala.et.al2013spectral,Saade&Krzakala&Zdeborova2014,Bordenave.et.al2015non-backtracking});
It was observed in the physics literature that the informative eigenvalues (i.e., those corresponding to eigenvectors which encode the community structure) of these matrices are well separated from the bulk and can be used to estimate the number of communities, but the properties of this estimator have never been investigated, either theoretically or empirically.  
We show that the number of ``informative'' (to be defined explicitly below)  eigenvalues of these matrices directly estimates the number of communities, and the estimate performs well under different network models and over a wide range of parameter values, outperforming existing methods designed specifically for estimating $K$ under either SBM or DCSBM.   This method is extremely computationally efficient, since all it requires is computing a few leading eigenvalues of just one typically sparse matrix, and to the best of ourknowledge, is by far the fastest available accurate method for estimating the number of communities.  

Several new methods for estimating the number of communities $K$ have been developed concurrently with the present paper. For example, \cite{Riolo&Cantwell&Reinert&Newman2017} use a variant of the Chinese restaurant process to generate community assignments, which automatically yields a choice of $K$; this method is implemented via a Monte Carlo sampling scheme, which is computationally intensive.   A method based on semi-definite programming, another very computationally intensive technique,  was derived and proved to be consistent for assortative networks by \cite{Yan&Sarkar&Cheng2018}.    Improving on \cite{Wang&Bickel2015}, the authors of \cite{Hu&Qin&Yan&Zhao2019} proposed a corrected BIC criterion in \cite{Wang&Bickel2015} to correct for under-estimation.   More recently, \cite{Ma&Su&Zhang2018} combined spectral clustering with binary segmentation to derive a new estimate of $K$. Compared to all these new methods,  the estimators based on Bethe Hessian or non-backtracking matrices we study is still the most computationally efficient, arguably the simplest, and competitive on estimation accuracy (see \cite{Ma&Su&Zhang2018} for some numerical comparisons).    The theoretical analysis of the Bethe-Hessian and the nonbactracking matrices we provide in this paper explain this performance and cover a wider range of settings, including sparse, dense, assortative and disassortative networks; no other method is known to be applicable under a wider range of settings, and most are narrower.

\section{Preliminaries}
Recall $A$  is the $n\times n$ symmetric  network adjacency matrix.  Let $d_i=\sum_{j=1}^n A_{ij}$ be
the degree of node $i$.  Treating $A$ as a  random matrix, let $\E A$ be the expectation of $A$,
and let $d=\frac{1}{n}\sum_{i=1}^n \E d_i$ be the average expected node degree.    

%For a symmetric matrix $X$, let $\l_k(X)$ be the $k$-th largest eigenvalue of $X$.  \liza{In absolute value or not?  Just need to specify. }  We say that a property holds with high probability
%if the probability that it occurs tends to one as $n \rightarrow\infty$.   \liza{I am not sure the previous sentence is necessary?}
    
\subsection{The non-backtracking matrix}\label{sec: nonbacktracking}

Let $m$ be the number of edges in an undirected network, $2m =  \sum_{i,j=1}^nA_{ij}$.  To construct the  non-backtracking matrix,  we represent the edge between node $i$ and node $j$  by two directed edges,  one from $i$ to $j$ and the other from $j$ to $i$.
The $2m \times 2m$ matrix $\tilde{B}$,  indexed by these directed edges, is defined by
\begin{equation*}
  \tilde{B}_{i \rightarrow j, k \rightarrow l} =
  \left\{
    \begin{array}{ll}
      1 & \hbox{if } j = k \text{ and } i \neq l \\
      0 & \hbox{otherwise.}
    \end{array}
  \right.
\end{equation*}
It is well-known \cite{Angel&Friedman&Hoory2015,Krzakala.et.al2013spectral} that the spectrum of $\tilde{B}$ consists of $\pm 1$ and eigenvalues of an $2n\times 2n$ matrix
\begin{equation}\label{eq: NB def}
  B =
\left(
  \begin{array}{cc}
    0_n & D - I_n \\
    -I_n & A \\
  \end{array}
\right).
\end{equation}
Here $0_n$ is the $n\times n$ matrix of all zeros, $I_n$ is the $n\times n$ identity matrix,
and $D = \mathrm{diag}(d_i)$ is $n \times n$ diagonal matrix with degrees $d_i$ on the diagonal.
It was observed by \cite{Krzakala.et.al2013spectral} that if a network has $K$ communities then
the first $K$ largest (in absolute value) eigenvalues of $B$ are real-valued and well separated from the bulk,
which is contained in a circle of radius $\|B\|^{1/2}$.
We refer to these $K$ eigenvalues as informative eigenvalues of $B$.
It was also shown by \cite{Krzakala.et.al2013spectral} that the spectral norm of the non-backtracking matrix
is approximated by
\begin{equation}\label{eq: non-bactracking spectral norm}
  \tilde{d} = \Big(\sum_{i=1}^n d_i\Big)^{-1} \Big(\sum_{i=1}^n d_i^2\Big) - 1.
\end{equation}

For a special case of a sparse SBM with a bounded expected node degree, \cite{Bordenave.et.al2015non-backtracking} proved
that the leading eigenvalues of $B$ concentrate around non-zero eigenvalues of $\E A$
and the bulk is contained in a circle of radius $\|B\|^{1/2}$, and used the corresponding leading eigenvectors to recover the community labels. The spectrum of $B$ for denser Erdos-Renyi graphs was later analyzed in \cite{Wang&Wood2017}. In particular, if $d\gg n^{5/6}$, then every eigenvalue of $(d-1)^{-1/2}B$ is within a vanishing distance from a limiting spectrum supported on the unit circle of the complex plane. In Theorem~\ref{thm: spectrum nbm rel dense graphs} below we extend this result to much sparser and more general random graphs and require only that $d\gg \log n$.

\subsection{The Bethe Hessian matrix}\label{sec: bethe hessian}

The Bethe Hessian matrix is defined by
\begin{equation}\label{eq: bethe hessian}
  H(r) = (r^2 -1) I - r A + D,
\end{equation}
where $r \in \R$ is a parameter.
In graph theory, the determinant of $H(r)$ is the Ihara-Bass formula for the graph zeta function.
It vanishes if $r$ is an eigenvalue of the non-backtracking matrix \cite{Hashimoto1989,Bass1992,Angel&Friedman&Hoory2015}.   
The Bethe Hessian was used for community detection by \cite{Saade&Krzakala&Zdeborova2014} 
Under the SBM, they argued that the best choice of $r$ is $r_c= \pm \sqrt{d}$,
depending on whether the network is assortative or disassortative;  
for a more general network, they take  $r_c = \pm  \|B\|^{1/2}$.
For assortative sparse networks with $K$ communities and a bounded $d$,
they empirically showed that the $K$ eigenvalues of $H(r_c)$ whose corresponding eigenvectors encode the community structure
are negative, while the bulk of $H(r_c)$ are positive.
Thus, the number of negative eigenvalues of $H(r_c)$ corresponds to the number of communities.
In Theorem~\ref{thm: consistency BH method} below, we prove that this method isindeed consistent for graphs with $d\gg \log n$. 
%We will refer to these negative eigenvalues of $H(r_c)$ as informative eigenvalues.

\section{Spectral estimates of the number of communities}
The spectral properties of the non-backtracking and the Bethe Hessian matrices lead to natural estimates of the number of communities, but they have not been previously considered in this context.   We next outline several spectral methods to determine the number of communities $K$. They are based on simple counts of eigenvalues of either the non-backtracking matrix or the Bethe Hessian matrix, and therefore do not require any adjustment for different models such as SBM or DCSBM.   We list them in Table~\ref{tb: spectral methods}, and proceed to explain the motivation for each one.  

\begin{table}[!ht]
\renewcommand{\arraystretch}{2}
\begin{center}
\begin{tabular}{|l|c|l|}\hline
Method & Parameter & Estimated number of communities $\hat K$ \\
\hline
NB & None  & $\left| \left\{ \l(B)\in\R: \l(B) \ge \|B\|^{1/2} \right\} \right|$ \\
\hline
BHm & $r_m = \left(\frac{\sum_{i=1}^n d_i^2}{\sum_{i=1}^n d_i}  - 1\right)^{1/2}$& $\max\left\{k:\l_{n-k}(H(r_m))\leq 0\right\}$ \\
\hline
BHmc & $r_m = \left(\frac{\sum_{i=1}^n d_i^2}{\sum_{i=1}^n d_i}  - 1\right)^{1/2}$& $\max\{k: t\l_{n-k+1}(H(r_m)) \le \l_{n-k}(H(r_m)) \}$ \\
\hline
BHa &$r_a = \left(\frac{1}{n}\sum_{i=1}^n d_i\right)^{1/2}$ & $\max\{k:\l_{n-k}(H(r_a))\leq 0\}$ \\
\hline
BHac &$r_a = \left(\frac{1}{n}\sum_{i=1}^n d_i\right)^{1/2}$ & $\max\{k: t\l_{n-k+1}(H(r_a)) \le \l_{n-k}(H(r_a)) \}$ \\
\hline
\end{tabular}
\end{center}
\caption{Spectral methods for estimating the number of communities.}
\label{tb: spectral methods}
\end{table}

\subsection{Estimating $K$ from the non-backtracking matrix}\label{sec: NB}

As we will show in Theorems~\ref{thm: NB consistency sparse regime} and \ref{thm: consistency of NBM method dense} under the SBM, the informative eigenvalues of the non-backtracking matrix are real-valued and separated from the bulk of radius $\|B\|^{1/2}$.
Therefore we can estimate $K$ by counting the number of real eigenvalues of $B$ that are at least $\|B\|^{1/2}$. %\tcr{this needs explanation!}
We denote this method by NB (for non-backtracking).
As shown by Theorem~\ref{thm: consistency of NBM method dense} and numerical results in Section~5, this estimate of $K$ also works under much more general models with low-rank structure such as DCSBM.
When the network is balanced (communities have similar sizes and edge densities), NB performs well;
however, the accuracy of NB drops if the communities are unbalanced in either size or edge density.  Since $B$ is not symmetric, computing the eigenvalues of $B$ is slightly more demanding than that of the Bethe Hessian matrix for large networks.

\subsection{Estimating $K$ from the Bethe Hessian matrix}\label{sec: BH}

The number of communities corresponds to the number of negative eigenvalues of $H(r)$;  the challenge is in choosing an appropriate value of $r$.   It was argued by \cite{Saade&Krzakala&Zdeborova2014} that when $r=\|B\|^{1/2}$,
the informative eigenvalues of $H(r)$ are negative, while the bulk are positive;
by \cite{Krzakala.et.al2013spectral}, $\|B\|$ can be approximated by $\tilde{d}$ from \eqref{eq: non-bactracking spectral norm}.
Following these results, we first choose $r$ to be $r_m = \tilde{d}^{1/2}$ 
and call the corresponding method BHm.
Simulations show that using $r=r_m$ and $r=\|B\|^{1/2}$ produce similar results;
we choose $r = r_m$ because computing $r_m$ is less demanding than computing $\|B\|^{1/2}$.

Another choice of $r$ is $r_a = \sqrt{(d_1+\cdots+d_n)/n}$, which was proposed by \cite{Saade&Krzakala&Zdeborova2014} for recovering the community structure under the SBM; we call the corresponding method BHa.
We have found that when the network is balanced, NB, BHm and BHa perform similarly;
when the network is unbalanced, BHa produces better results.

Both BHm and BHa tend to underestimate the number of communities, especially when the network is unbalanced.
In that setting, some informative eigenvalues of $H(r)$ become positive, although they may still be far from the bulk.
Based on this observation, we correct BHm and BHa by also using positive eigenvalues of $H(r)$ that are much close to zero than to the bulk.
Namely, we sort eigenvalues of $H(r)$ in non-increasing order $\l_1\ge \l_2\ge\cdots\ge\l_n$, and estimate $K$ by
\begin{equation}\label{eq: K correction}
  \hat{K} = \max\{k: t\l_{n-k+1} \le \l_{n-k} \},
\end{equation}
where $t>0$ is a tuning parameter.
Note that if $\l_{n-k_0+1}<0$ then $\hat{K} \ge k_0$ because $\l_{n-k_0+1} \le \l_{n-k_0}$,
therefore the number of negative eigenvalues of $H(r)$ is always upper bounded by $\hat{K}$.
Heuristically, if the bulk follows the semi-circular law and $\l_{n-k}\ge 0$ is given,
then the probability that $0 \le \l_{n-k+1} \le \l_{n-k}/t$ is less than $1/t$.
When $1/t$ is sufficiently small, we may suspect that $\l_{n-k+1}$ is an informative eigenvalue.
In practice we find that $t\in [4,6]$ works well; we will set $t=5$ for all computations in this paper.
Simulations show that $\hat{K}$ performs well, especially for unbalanced networks.
The resulting methods are denoted by BHmc and BHac, respectively.
We will also use BH to refer to all the methods that use the Bethe Hessian matrix. For a summary of these methods, see Table~\ref{tb: spectral methods}.

\section{Consistency}

The consistency of the non-backtracking matrix based method (NB) for estimating the number of communities in the sparse regime under the stochastic block model follows directly from Theorem~4 of \cite{Bordenave.et.al2015non-backtracking}.
We state this consistency result here for completeness.
The proof given by \cite{Bordenave.et.al2015non-backtracking} is combinatorial in nature and this approach unfortunately does not extend to any other regimes or the Bethe-Hessian matrix.

\begin{theorem}[Consistency in the sparse regime]\label{thm: NB consistency sparse regime}
%\tcr{need to elaborate - what is the sparse regime?  What is the assumption on $\lambda_n$? Under what conditions on on SBM does it hold, e.g. what does it take for all non-zero eigenvaluesto be $> \sqrt{\lambda_n}$ when $d_i = \lambda_n$?   }
Consider a stochastic block model with $\pi = (\pi_1,...,\pi_K)$
and $P = (P_{kl})=\frac{1}{n}P^{(0)}$ for some fixed $K\times K$ symmetric matrix $P^{(0)}$.
Assume that %$\mathrm{diag}(\pi)P$ is positively regular, that is
$(\mathrm{diag}(\pi)P)^r$ has positive entries for some positive integer $r$.
Further, assume that
%\tcr{[$\E d_i = \E d_j = \l_n$ for all $i\neq j$ -- I don't understand why this formulation is neccesary, why not just]}
$E (d_i) = d > 1$ for all $i$,   and all $K$ non-zero eigenvalues of $P$ are greater than $\sqrt{d}$.
Then with probability tending to one as $n\rightarrow\infty$,
the number of real eigenvalues of $B$ that are at least $\|B\|^{1/2}$ is equal to $K$.
\end{theorem}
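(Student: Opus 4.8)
The plan is to obtain the statement as a corollary of Theorem~4 of \cite{Bordenave.et.al2015non-backtracking}, so essentially all of the work lies in matching notation and checking hypotheses rather than in any new probabilistic estimate. The object that controls the outlier eigenvalues is the mean connectivity matrix $M = \diag(\pi)\,P^{(0)}$. Since $M$ is similar to the symmetric matrix $\diag(\sqrt{\pi})\,P^{(0)}\,\diag(\sqrt{\pi})$, its nonzero eigenvalues are real; I will write them as $\mu_1\ge|\mu_2|\ge\cdots$, and there are exactly $\rank(P^{(0)})$ of them, a number the hypotheses force to equal $K$. (The ``$K$ non-zero eigenvalues of $P$'' in the statement are understood as those of $M$, the only $O(1)$-scale matrix for which the comparison with $\sqrt{\l_n}$ makes sense.) First I would record the three conclusions of Theorem~4 in the present notation: (i) the top eigenvalue $\rho_1(\tilde{B})$ concentrates around the Perron value $\mu_1$, so that $\|\tilde{B}\|\to\mu_1$; (ii) each eigenvalue $\mu_i$ of $M$ with $\mu_i^2>\mu_1$ (equivalently $|\mu_i|>\sqrt{\mu_1}$) is approximated by exactly one \emph{real} eigenvalue of $\tilde{B}$ converging to $\mu_i$; and (iii) every remaining eigenvalue of $\tilde{B}$ lies in the disk of radius $\sqrt{\mu_1}+o(1)$.

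Next I would verify the hypotheses of Theorem~4. The assumption that $(\diag(\pi)P)^r$ has positive entries is, up to the harmless scalar $n^{-r}$, exactly primitivity of $M$, which furnishes a simple positive Perron eigenvalue $\mu_1$ and a supercritical, well-behaved multi-type branching process. The balanced-degree assumption $\E d_i=\l_n$ for every $i$ forces $\mu_1=\l_n$: writing out $\E d_i$ gives $P^{(0)}\pi=\l_n\onevector$, so the positive vector $\onevector$ is a left Perron eigenvector of $M$, pinning the Perron eigenvalue at $\l_n$. Consequently $\l_n>1$ reads $\mu_1>\sqrt{\mu_1}$, placing the leading eigenvalue strictly outside the bulk radius, and the hypothesis that all $K$ nonzero eigenvalues exceed $\sqrt{\l_n}=\sqrt{\mu_1}$ places every $\mu_i$ strictly outside that radius.

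With the hypotheses checked, I would close by counting. By (ii) the $K$ eigenvalues $\mu_1,\dots,\mu_K$ each contribute one real eigenvalue of $\tilde{B}$ with limit strictly above $\sqrt{\mu_1}$, while by (iii) all other eigenvalues are confined to the disk of radius $\sqrt{\mu_1}+o(1)$. Since by (i) the empirical threshold satisfies $\|\tilde{B}\|^{1/2}\to\sqrt{\mu_1}$, with probability tending to one the $K$ informative eigenvalues lie above $\|\tilde{B}\|^{1/2}$ and every remaining eigenvalue lies below it, giving exactly $K$ real eigenvalues of magnitude at least $\|\tilde{B}\|^{1/2}$.

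The step I expect to be the only genuine obstacle is the threshold matching in the final paragraph: Theorem~4 supplies separation relative to the \emph{population} radius $\sqrt{\mu_1}$, whereas the estimator thresholds at the \emph{random} value $\|\tilde{B}\|^{1/2}$. The strict inequalities in the hypotheses are exactly what guarantee a uniform gap $\d=\min_i\big(|\mu_i|-\sqrt{\mu_1}\big)>0$ between the informative eigenvalues and the bulk radius, so that on the high-probability event provided by Theorem~4 one can absorb the $o(1)$ fluctuation of $\|\tilde{B}\|^{1/2}$ about $\sqrt{\mu_1}$ together with the $o(1)$ fluctuations of the individual eigenvalues below $\d/2$, ensuring no eigenvalue is pushed across the threshold. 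Making this quantitative is the one place where a little care beyond direct quotation of Theorem~4 is required.
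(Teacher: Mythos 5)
Your proposal takes essentially the same approach as the paper, which offers no independent argument at all: it simply states that the result follows directly from Theorem~4 of Bordenave, Lelarge, and Massouli\'e, precisely the reduction you carry out. Your work is in fact more detailed than the paper's --- checking primitivity of $M=\mathrm{diag}(\pi)P^{(0)}$, pinning the Perron eigenvalue at $\lambda_n$ via the balanced-degree condition, reading ``eigenvalues of $P$'' as those of $M$ (consistent with the paper's $G(n,\frac{a}{n},\frac{b}{n})$ example), and flagging the bulk-versus-random-threshold matching, a subtlety the paper's bare citation silently passes over.
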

To better understand the condition on the eigenvalues of $P$, consider the simple model $G(n,\frac{a}{n},\frac{b}{n})$.
This model assumes that there are two communities of equal sizes and nodes are connected with probability $a/n$ if they are in the same community, and $b/n$ otherwise. Since the two non-zero eigenvalues of $P$ are $(a+b)/2$ and $(a-b)/2$, the condition on eigenvalues of $P$ is $(a-b)^2>2(a+b)$.   This matches the phase transition condition for the detectability in the sparse regime \cite{Mossel.et.al.2012,Mossel&Neeman&Sly2014,Massoulie:2014:CDT:2591796.2591857}.

Next, we prove the consistency of the proposed methods  in the denser regime  $d\gg \log n$, sometimes referred to as semi-dense in contrast to the dense regime of $d = O(n)$.  For this regime, we make the following assumptions.

%\subsection{Model assumptions}
%We study the spectrum of the nonbacktracking matrix and the Bethe Hessian matrix of random graphs satisfying the following assumptions. 
\begin{assumption}\label{as: same node degree}
All nodes have the same expected degree satisfying 
\begin{equation*}
\E \sum_{j=1}^n A_{ij} = d \ge C\log n, \quad 1\le i \le n. 
\end{equation*} 
\end{assumption}
\begin{assumption}\label{as: lowrank EA}
Matrix $\E A$ is of rank $K$ and nonzero eigenvalues of $\E A$ satisfy
\begin{equation*}
|\l_1(\E A)| \ge |\l_2(\E A)| \ge \cdots \ge |\l_K(\E A)| \ge 4d^{1/2} + C(d^{1/4}+(\log n)^{1/2}).
\end{equation*}
\end{assumption}
\begin{assumption}\label{as: upper bound on degree}
The expected degree $d$ in Assumption~\ref{as: same node degree} satisfies
\begin{equation*}
d^5\max_{i,j} \E A_{ij} \le n^{-1/13}.
\end{equation*}
\end{assumption}

Following \cite{Bordenave.et.al2015non-backtracking}, we assume in Assumption~\ref{as: same node degree} that all nodes have the same expected degree. This corresponds perhaps to the most challenging setting where expected degrees alone do not contain information about the latent structure of interest. As in \cite{Bordenave.et.al2015non-backtracking} and \cite{Wang&Wood2017}, this assumption allows us to simplify our analysis of the non-backtracking matrix considerably. If some communities have different expected degrees, we can first use node degrees to identify them and divide the network into sub-networks of similar expected node degrees and apply our results on the sub-networks. 
Note that for the degree-corrected stochastic block model, if the underlying stochastic block model satisfies this assumption and the degree parameters are drawn from the same distribution, then the degree-corrected stochastic block model itself will also satisfy the assumption.  
 %\can{I am not sure if we can remove this assumption. It allows us to have explicit formulas for eigenvalues and eigenvectors of a ``population'' version of the nonbacktracking matrix $H$, defined in Section~\ref{sec: spectrum of H}. In particular, we can decompose $\R^{2n}$ as a sum of orthogonal 2-dimensional subspaces spanned by eigenvectors of $H$, and that play an important role in the proof}

The lower bound on $\l_K(\E A)$ in Assumption~\ref{as: lowrank EA} is of the form $|\l_K(\E A)|\ge 4(1+o(1))\sqrt{d}$ when $d\gg \log n$. Under $G(n,\frac{a}{n},\frac{b}{n})$, this bound is $(a-b)^2\ge 32(1+o(1))(a+b)$. For a comparison, exact community recovery under $G(n,\frac{a}{n},\frac{b}{n})$ with known number of communities requires $(a-b)^2> 2(a+b+2\sqrt{ab})\log n$ (see e.g. \cite[Theorem~13]{Abbe.com.detect.review2018}).

Assumption~\ref{as: upper bound on degree} guarantees a sharp bound on $\|A-\E A\|$, which is established by \cite{Benaych-georges&Bordenave&Knowles2017}. 
We use this bound in the proofs of Theorem~\ref{thm: consistency of NBM method dense} and Theorem~\ref{thm: consistency BH method} below. For the  Erd\"os-R\'enyi model, Assumption~\ref{as: upper bound on degree} is equivalent to $d\le n^{2/13}$. 
It is unclear if this condition can be removed from the result of  \cite{Benaych-georges&Bordenave&Knowles2017} and consequently from Theorem~\ref{thm: consistency of NBM method dense} and Theorem~\ref{thm: consistency BH method}.    

%The last step of the proof of Theorem~\ref{thm: consistency of NBM method dense} below requires a sharp bound on  $\|A-\E A\|$, which is established by
%\cite{Benaych-georges&Bordenave&Knowles2017}. Assumption~\ref{as: upper bound on degree} originally appearing in \cite{Benaych-georges&Bordenave&Knowles2017} guarantees that the bound holds with high probability.   
%For the  Erd\"os-R\'enyi model, \eqref{eq: Knowles et al condition} is equivalent to $d\le n^{2/13}$. 
%It is unclear if this condition can be removed from the result of  \cite{Benaych-georges&Bordenave&Knowles2017} and consequently from Theorem~\ref{thm: consistency of NBM method dense}.   

\begin{theorem}[Consistency of NB based method in the semi-dense regime]\label{thm: consistency of NBM method dense}   
Consider random graphs that satisfy Assumptions~\ref{as: same node degree}, \ref{as: lowrank EA} and \ref{as: upper bound on degree}.
Then with probability at least $1-1/n$, the nonbacktracking matrix has exactly $K$ real eigenvalues with magnitude at least $(1+\varepsilon)\sqrt{d}$ and the remaining eigenvalues are of magnitude smaller than $(1+\varepsilon)\sqrt{d}$, where
$$
\varepsilon = C\left[\left(\frac{\log n}{d}\right)^{1/4}+\left(\frac{1}{d}\right)^{1/8}\right].
$$
\end{theorem}

According to Theorem~\ref{thm: consistency of NBM method dense}, the $K$ informative eigenvalues of the nonbacktracking matrix are separated from the bulk by a circle of radius $(1+\varepsilon)\sqrt{d}$, where $\varepsilon$ is vanishing for $d\gg \log n$. 
%This is slightly weaker than the conclusion of Theorem~\ref{thm: NB consistency sparse regime} where the circle is of radius exactly $\sqrt{d}$.      
%Note that for relatively dense graphs ($d\gg \log n$) we are not able to separate the $K$ informative eigenvalues and the bulk of $B$ by a circle of radius exactly $\sqrt{d}$ as in Theorem~\ref{thm: NB consistency sparse regime}. Instead, we have to enlarge the circle slightly by a factor of $1+\varepsilon$, where $\varepsilon$ is vanishing for $d\gg \log n$.
%It is perhaps an artifact of our proof technique, which is based on perturbation theory. 
Unlike in Theorem~\ref{thm: NB consistency sparse regime}, $K$ is allowed to depend on $n$ in Theorem~\ref{thm: consistency of NBM method dense}.

To compute this estimator in practice, we simply set $\varepsilon=0$ and estimate $d$ with the average observed degree $\bar d = (d_1+\cdots+d_n)/n$.  It is straightforward to show that $\bar d$ is close to $d$ with high  probability.

%Condition \eqref{eq: Knowles et al condition} allows us to have a sharp bound on $\|A-\E A\|$ (according to \cite{Benaych-georges&Bordenave&Knowles2017}), which is used  in the last step of the proof of Theorem~\ref{thm: consistency of NBM method dense}. For the  Erd\"os-R\'enyi model, \eqref{eq: Knowles et al condition} is equivalent to $d\le n^{2/13}$. This constraint is likely an artifact of the proof technique used in \cite{Benaych-georges&Bordenave&Knowles2017}.   \liza{This sentence is confusing - are we affected by this artifact of proof, or are they?  Not sure what you are saying here. }

The key result for proving Theorem~\ref{thm: consistency of NBM method dense} is Theorem~\ref{thm: spectrum nbm rel dense graphs} in Appendix~\ref{app: NBM}, which establishes a connection between the spectra of nonbacktracking and adjacency matrices, and may also be of independent interest. Theorem~\ref{thm: spectrum nbm rel dense graphs} is a significant improvement on Theorem~1.5 in \cite{Wang&Wood2017}, which only considers the Erd\"os-R\'enyi model and requires a much stronger condition,  $d\gg n^{5/6}$ instead of $d\gg \log n$. 
%We postpone Theorem~\ref{thm: spectrum nbm rel dense graphs}  for now because additional preparation is required before we can state it. 

%\subsection{Consistency of Bethe Hessian based method}

For the Bethe Hessian, no formal results have been previously established. We show in the following theorem that both BHm and BHa methods produce consistent estimator of $K =\mathrm{rank}(\E A) $, provided that the following stronger version of Assumption~\ref{as: lowrank EA} holds.
\begin{assumption}\label{as: modified lowrank EA}
Matrix $\E A$ is of rank $K$ and nonzero eigenvalues of $\E A$ satisfy
\begin{equation*}
\l_1(\E A) \ge \l_2(\E A) \ge \cdots \ge \l_K(\E A) \ge 4d^{1/2} + C(d^{1/4}+(\log n)^{1/2}).
\end{equation*}
\end{assumption}
Note that Assumption~\ref{as: lowrank EA} allows networks to be disassortative, meaning probabilities of connections between communities are higher than within communities, in which case the eigenvalues of $\E A$ may be negative. In contrast, Assumption~\ref{as: modified lowrank EA} requires all eigenvalues of $\E A$ to be non-negative. 

\begin{theorem}[Consistency of the Bethe Hessian matrix method]\label{thm: consistency BH method}
Consider random graphs that satisfy Assumptions~\ref{as: same node degree}, \ref{as: upper bound on degree} and \ref{as: modified lowrank EA}.
Then with probability at least $1-1/n$, the Bethe Hessian $H(r)$ with $r=(1+\varepsilon)r_m$ or $r=(1+\varepsilon)r_a$ and $\varepsilon = C\sqrt{\log n/d}$ has exactly $K$ negative eigenvalues.
\end{theorem}

%Again, the factor $1+\varepsilon$ is likely an artifact of our proof. 
Again in practice, we set $\varepsilon=0$ to compute the estimator.  
%More work is needed to remove the factor $1+\varepsilon$ from Theorem~\ref{thm: consistency BH method}, which will require fundamentally different approaches. We leave this as a topic for future work.

\section{Numerical results}

In this section, we briefly compare the empirical accuracy of estimating the number of communities by using the non-backtracking matrix (NB), and all the versions based on the Bethe Hessian matrix  (BHm, BHmc, BHa, and BHac), described in Section~\ref{sec: NB} and Section~\ref{sec: BH}. 
We compare them with two other methods representative of approaches in the literature to estimating the number of communities in networks: the network cross-validation method (NCV) proposed by \cite{Chen&Lei2014} and
a likelihood-based BIC-type method (VLH, for variational likelihood) proposed by \cite{Wang&Bickel2015}.
We use NCVbm and NCVdc to denote the versions of the NCV method specifically designed for the SBM and the DCSBM, respectively;  VLH is only designed to work under the SBM, so it is not included in the DCSBM comparisons.
To make comparisons with VLH computationally feasible, instead of using the variational method to estimate the posterior of the community labels
as done by \cite{Wang&Bickel2015}, we first estimate the node labels by the pseudo-likelihood method proposed by \cite{amini2013pseudo} and then compute the posterior following  \cite{Wang&Bickel2015}.  In small-scale simulations where both approaches are computationally feasible (results omitted) we found that substituting pseudo-likelihood for the variational method has very little effect on the estimate of $K$.    The tuning parameter of VLH is set to one following \cite{Wang&Bickel2015}.   We do not include the method of \cite{Bickel&Sarkar2013} in these comparisons due to its high computational cost.    Note that our theoretical analysis assumes for simplicity that all expected node degrees are equal  (Theorems~\ref{thm: NB consistency sparse regime}, \ref{thm: consistency of NBM method dense} and \ref{thm: consistency BH method});  however, we allow different expected node degrees in simulations.   In this section,  $d=\frac{1}{n}\sum_{i=1}^n \E d_i$ denotes the average expected node degree.   

\subsection{Synthetic networks}

To generate synthetic networks, we fix the labels $c \in \{1,...,K\}^n$ so that $c_i = k$ if $n\pi_{k-1} + 1 \le i < n\pi_k$, where $\pi_0=0$.
The label matrix $Z \in \R^{n\times K}$, given by  $Z_{ik}={\mathbf 1}(c_i=k)$,  encodes $c$ by representing each node's label with a row of $K$ elements, exactly one of which is equal to 1,  and the rest are equal to 0.  
Let $\tilde{P}$ be a $K \times K$ matrix with the diagonal $w=(w_1,...,w_K)$ and off-diagonal entries $\beta$, and $M = ZPZ^T$.
Under the stochastic block model, we generate entires of $A$ using the edge probability matrix $E (A) = \rho_n M$;   the average degree $d$ is controlled by $\rho_n$.
The parameter $w$ controls the relative edge densities within communities, and $\beta$ controls the out-in probability ratio. Smaller values of $\beta$ and larger values of $d$ make the problem easier.
For the DCSBM, we generate the degree parameters $\theta_i$ from a distribution that takes two values,  $\P(\theta=1)=1-\gamma$ and $\P(\theta=0.2)=\gamma$.
Parameter $\gamma$ controls the fraction of ``hubs'', the high-degree nodes allowed under the DCSBM, and setting $\gamma = 0$ gives back the regular SBM.
Given $\theta=(\theta_i,...,\theta_n)$, the edges are generated independently with
probabilities $E (A) = \rho_n \mathrm{diag}(\theta)M\mathrm{diag}(\theta)$,
where $\mathrm{diag}(\theta)$ is a diagonal matrix with $\theta_i$'s on the diagonal.

The number of nodes is set to $n=1200$,
the out-in probability ratio $\beta = 0.2$,
and we vary the average degree $d$,  weights $w$, and community sizes determined by the vector $\pi$.   We consider three different values for the number of communities, $K = 2$, 4, and 6.
For each setting, we generate $200$ replications of the network and record the accuracy, defined as
the fraction of times a method correctly estimates the true number of communities $K$.   The methods NCV and VLH require a pre-specified set of $K$ values to choose from;  we use the set $\{1,2,...,8\}$ for synthetic networks
and $\{1,2,...,15\}$ for real-world networks.

%  \subsection{Varying average degree}
We start by varying the average degree $d$, which controls the overall difficulty of the problem, while keeping community sizes equal.    Figure~\ref{fig:sparsity}  shows the performance of all methods for the balanced community density case, $w_i=1$ for all $1\le i \le K$.   Figure~\ref{fig:unbalanced w} shows the unbalanced case, with $w=(1,2)$ for $K=2$, $w=(1,1,2,3)$ for $K=4$, and $w=(1,1,1,1,2,3)$ for $K=6$.   In every figure, the top row corresponds to the SBM ($\gamma = 0$) and the bottom row to the DCSBM ($\gamma=0.9$, meaning 10\% of nodes are hubs).

In general, we see that when everything is balanced (Figure~\ref{fig:sparsity}), all spectral methods perform fairly similarly and outperform both cross-validation (NCV) and the BIC-type criterion (VLH).  Also, for larger $K$ and especially under DCSBM, the corrected versions are somewhat better than the uncorrected ones, and the best Bethe Hessian methods are better than the non-backtracking estimator.

For networks with equal size communities but different edge densities within communities (Figure~\ref{fig:unbalanced w}), cross-validation performs poorly, but VLH relatively improves.   For larger $K$ the spectral methods are also distinguishable, with all BH methods dominating NB, and corrected versions providing improvement.  Overall, BHac is the best spectral method,  with VLH comparable for the SBM in this case.  The BHac method is the best overall for DCSBM where VLH is not applicable.
\begin{figure}[!ht]
  \centering
  \includegraphics[trim=110 10 100 10,clip,width=1\textwidth]{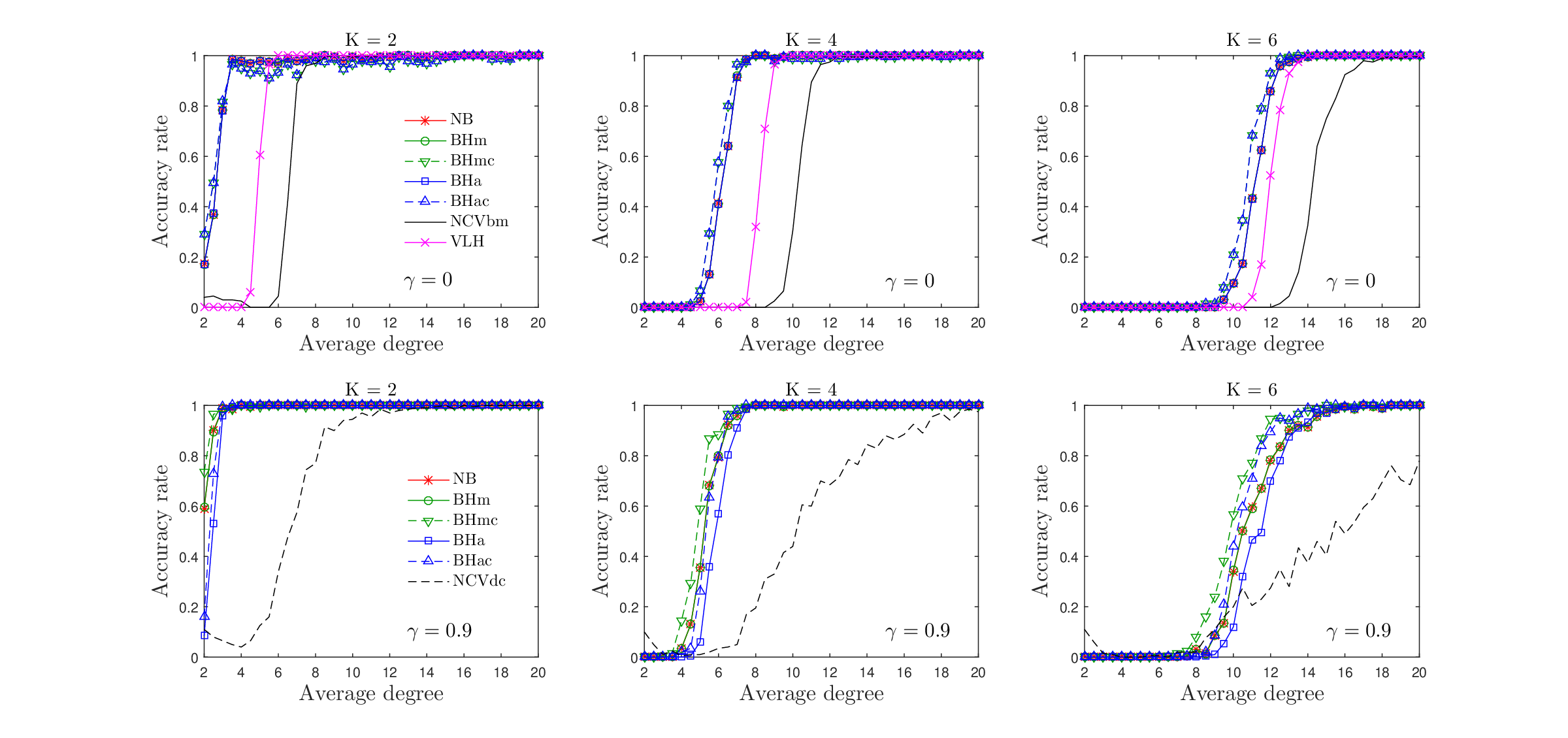}\\

  \doublespacing
  \caption{The accuracy of estimating $K$  as a function of the average degree.
    All communities have equal sizes, and $w_i=1$ for all $1\le i \le K$. }
  \label{fig:sparsity}
\end{figure}

\begin{figure}[!ht]
  \centering
  \includegraphics[trim=110 10 100 10,clip,width=1\textwidth]{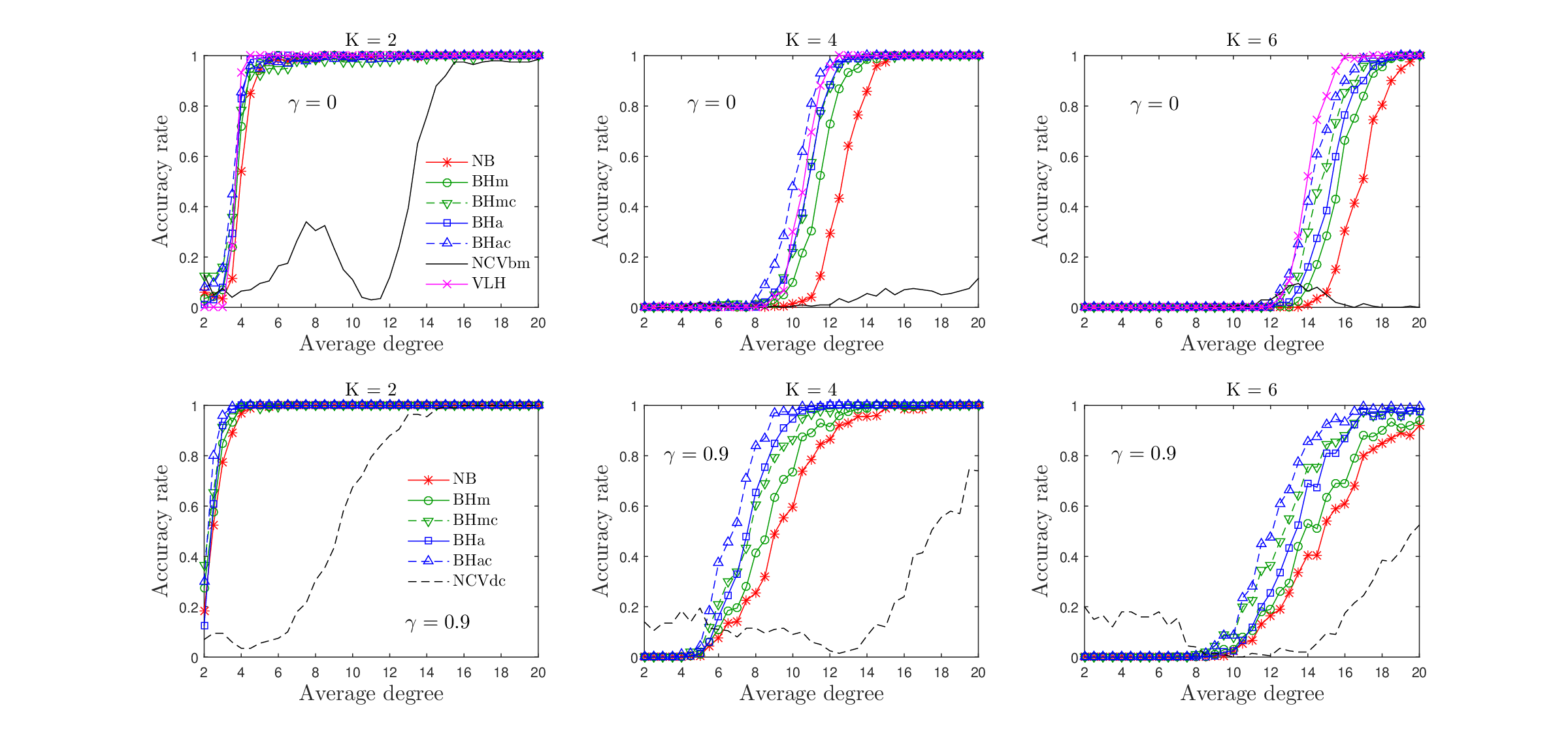}\\

  \doublespacing
  \caption{The accuracy of estimating $K$ as a function of the average degree.
  All communities have equal sizes; $w=(1,2)$ for $K=2$, $w=(1,1,2,3)$ for $K=4$, and $w=(1,1,1,1,2,3)$ for $K=6$.}
  \label{fig:unbalanced w}
\end{figure}

%\subsection{Varying community-size ratio}
Communities of different sizes present a challenge for community detection methods in general, and the presence of relatively small communities makes the problem of estimating $K$ difficult.  To test the sensitivity of all the methods to this factor, we change the proportions of nodes falling into each community
setting $\pi_1 = r/K$, $\pi_K = (2-r)/K$, and $\pi_i = 1/K$ for $2 \le i \le K-1$, and varying $r$ in the range $[0.2, 1]$.
As $r$ increases, the community sizes become more similar, and are all equal when $r=1$.
Figure \ref{fig:community_ratio} shows the performance of all methods as a function of $r$.
The top row corresponds to the SBM ($\gamma = 0$), the bottom row to the DCSBM ($\gamma=0.9$), and the within-community edge density parameters
$w_i = 1$ for all $1 \le i \le K$.  Here we see that VLH is less sensitive to $r$ than the spectral methods, but unfortunately it is not available under the DCSBM.   Cross-validation is still dominated by spectral methods except for very small values of $r$, where all methods perform poorly.    The corrections still provide a slight improvement for Bethe Hessian based methods, although all spectral methods perform fairly similarly in this case.

\begin{figure}[!ht]
  \centering
  \includegraphics[trim=110 10 100 10,clip,width=1\textwidth]{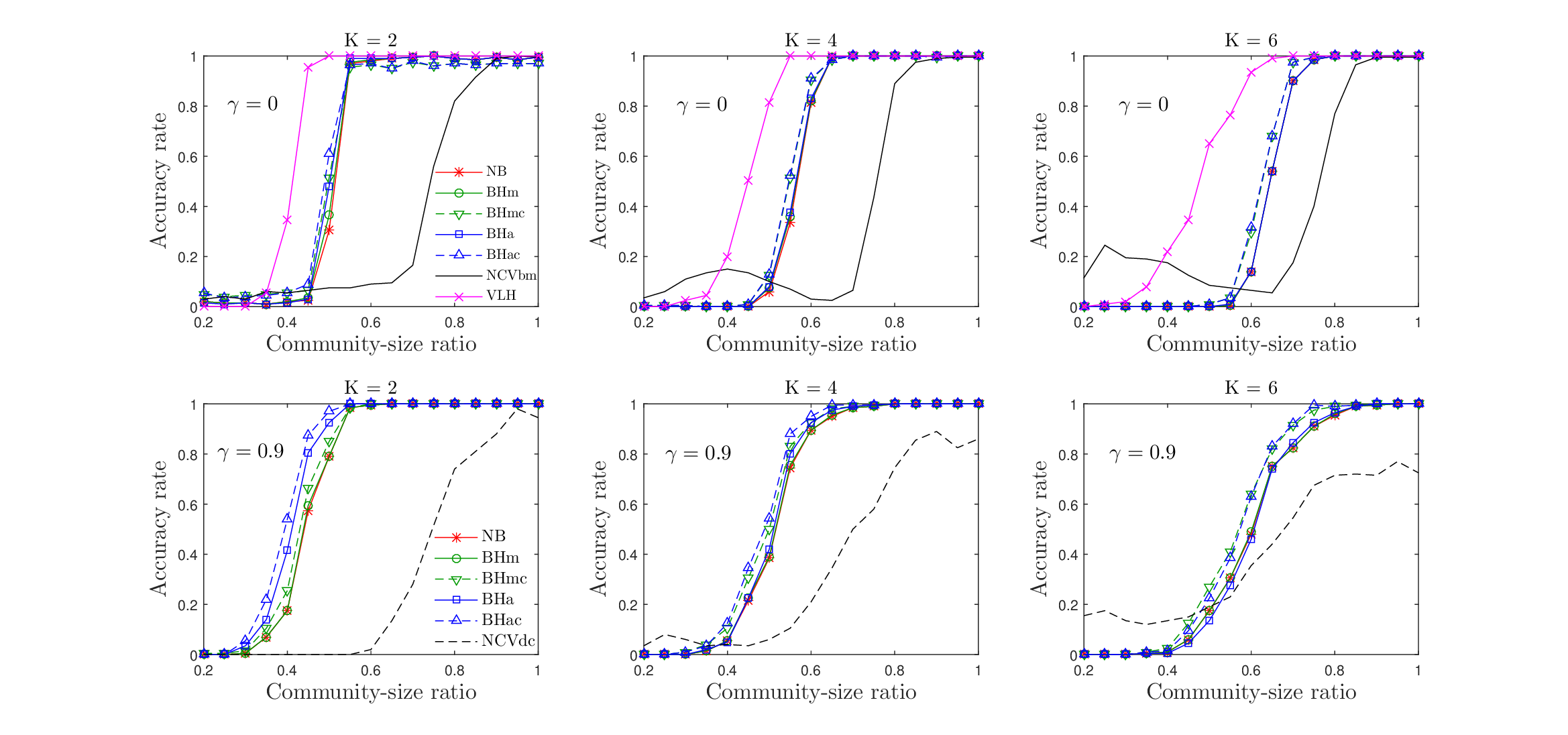}\\

  \doublespacing
  \caption{The accuracy of estimating  $K$ as a function of the community-size ratio  $r$:
  $\pi_1 = r/K$, $\pi_K = (2-r)/K$, and $\pi_i=1/K$ for $2 \le i \le K-1$. In all plots, $w_i=1$ for $1\le i \le K$;
  the average degrees are $\l_n = 10$ (left), $15$ (middle), and $20$ (right).}
  \label{fig:community_ratio}
\end{figure}

%\subsection{Under generalized DCSBM}
%To further investigate the performance of the proposed methods,
%we generate networks from the generalized DCSBM with four communities of equal sizes, $g = (0, 20, 80)$, and $\gamma=0.5$.
%An instance of this type of network with $\l_n=30$ is shown in Figure~\ref{fig:gdcbm example}.
%For a comparison, we also include CVdm here, although it is only designed for the DCSBM.
%The results are shown in Figure~\ref{fig:generalized dcbm} for balanced networks with $w=(1,1,1,1)$
%and for unbalanced networks with $w=(1,2,3,4)$.
%Overall, methods based on the non-backtracking and Bethe Hessian matrices perform well, and they all outperform CVdc.
%For balanced networks, NB, BHm, and BHa produce similar results;
%for unbalanced networks, BHa is noticeably more accurate than BHm, which in turn is better than NB.
%BHac and BHmc perform slightly better than BHa and BHm  (respectively) for balanced networks;
%the improvement is significant in the unbalanced setting.
%
%\begin{figure}[!ht]
%  \centering
%  \includegraphics[trim=110 100 100 100,clip,width=.9\textwidth]{gdcbm_comparison}\\
%  \caption{The accuracy of estimating the true number of communities under the generalized degree-corrected block model with $K=4$, $g = (0, 20, 80)$, and $\gamma=0.5$.}
%  \label{fig:generalized dcbm}
%\end{figure}

\subsection{Real world networks}

Finally, we apply the proposed methods on several popular network datasets which come with the ``ground truth'' node labels and the corresponding number of communities.   We note that the network structure itself can indicate a different number of communities than those given in the ground truth, since those are typically derived from one specific node attribute and there may be other communities or sub-communities corresponding to different attributes.   However, these ground truth labels still provide a reasonable baseline against which to compare estimators.   

The college football network \cite{Girvan&Newman2002} represents 115 US college football teams and the games they played in 2000.   The ``ground truth'' communities are the 12 conferences that the teams belong to.
The political books network \cite{Newman2006}, compiled around 2004,  consists of 105 books about US politics;  an edge is ``frequently purchased together'' on Amazon.  The $K=3$ communities are ``conservative'', ``liberal'', or ``neutral'', labelled manually based on contents.
The dolphin network \cite{Lusseau2003} is a social network of 62 dolphins, with
edges representing social interactions, and $K=2$ communities are based on a split which happened after one dolphin left the group.
Similarly, the karate club network  \cite{Zachary1977} is a social network of 34 members of a karate club,  with
edges representing friendships, and $K=2$ communities based on a split following a dispute.   Finally, the political blogs network \cite{Adamic05}, collected around 2004,
 consists of blogs about US politics, with edges representing web links, and $K=2$ communities are ``conservative'' and ``liberal'', based on manual labelling.  For this dataset, as is commonly done in the literature, we only consider its largest connected component of 1222 nodes.

Table~\ref{Table:real data} shows the estimated number of communities in these networks.
All spectral methods estimate the correct number of communities for dolphins and the karate club, and do a reasonable job for the college football and political books data.   For political blogs, all methods but NCV and VLH estimate a much larger number of communities, suggesting the estimates correspond to smaller sub-communities with more uniform degree distributions that have been previously detected by other authors.   We also found that the VLH method was highly dependent on the tuning parameter, and the estimates by NCVbm and NCVdc varied noticeably from run to run due to their use of random partitions.
\begin{table}[!ht]
\renewcommand{\arraystretch}{1.2}
\centering
\resizebox{\linewidth}{!}{%
\begin{tabular}{l|ccccccccc}
  \hline
   \textbf{Dataset} & NB & BHm  & BHmc & BHa & BHac & NCVbm & NCVdc & VLH & Truth\\ \hline
  \text{College football } & 10 & 10 & 10 & 10 & 10 & 14 & 13 & 9 & 12\\
  \text{Political books }   & 3 & 3 & 4 & 4 & 4 & 8 & 2 & 6 & 3\\
  \text{Dolphins}   & 2 & 2 & 2 & 2 & 2 & 4 & 3 & 2 & 2\\
  \text{Karate club}   & 2 & 2 & 2 & 2 & 2 & 3 & 3 & 4 & 2\\
  \text{Political blogs}   & 8 & 7 & 8 & 7 & 8 & 10 & 2 & 1 & 2\\ \hline
\end{tabular}}
\caption{Estimates of the number of communities in real-world networks.}
\label{Table:real data}
\end{table}

\section{Discussion}
The numerical experiments suggest that the spectral methods provide extremely fast and reliable estimates of the number of communities $K$ for balanced networks, with the Bethe Hessian based method with the threshold choice $r_a$ and the correction described in \eqref{eq: K correction} the best choice in most scenarios.   With communities of  significantly different sizes, they tend to underestimate $K$  by combining small communities together, which seems to be an intrinsic limitation of spectral methods.
This suggests that their estimates can be used as a lower bound on $K$ and a starting point for a  more elaborate and computationally demanding likelihood-based method like VLH, in the same way that spectral clustering can be used to initialize a more sophisticated community detection method.   Having a small set of plausible values of $K$ to focus on can significantly reduce the computational cost and improve the accuracy of estimating the number of communities.

For semi-dense networks, we show in Theorems~\ref{thm: consistency of NBM method dense} and \ref{thm: consistency BH method} that estimating the number of communities is possible below the exact recovery threshold. For example, under $G(n,\frac{a}{n},\frac{b}{n})$, our results require $(a-b)^2\ge 32(1+o(1))(a+b)$ while exact community recovery is feasible if $(a-b)^2> 2(a+b+2\sqrt{ab})\log n$. Determining the exact condition under which estimating the number of communities is possible is an interesting and challenging question and we leave it for future research. 

\appendix

\section{Proof of Theorem~\ref{thm: consistency of NBM method dense}}\label{app: NBM}

%For convenience we will work with $B/\sqrt{\alpha}$ where $\alpha=d-1$ and $d$ is the expected degree.  
%Wang and Wood \cite{Wang&Wood2017} proved that, for Erdos-Renyi graphs with degree at least $n^{5/6}\log n$, every eigenvalue of $B/\sqrt{\alpha}$ is within a vanishing distance from a limiting spectrum supported on the unit circle of the complex plane. 
%Our key result for proving Theorem~\ref{thm: consistency of NBM method dense} is Theorem~\ref{thm: spectrum nbm rel dense graphs} below where we extend the result of \cite{Wang&Wood2017} to much sparser graphs and require only that $d\gg \log n$. 
Following \cite{Wang&Wood2017}, we will work with the following rescaled conjugation of the nonbacktracking matrix $B$ defined in \eqref{eq: NB def} (which has the same eigenvalues as $B/\sqrt{\alpha}$ where $\alpha = d-1$)
\begin{equation}\label{eq: scaled NBM}
\begin{pmatrix}
\frac{1}{\sqrt{\alpha}}A & \frac{1}{\sqrt{\alpha}}(I - D) \\
I & 0
\end{pmatrix}
= 
\begin{pmatrix}
\frac{1}{\sqrt{\alpha}}A & -I \\
I & 0
\end{pmatrix} +
\begin{pmatrix}
0 & \frac{1}{\alpha}(\E D - D) \\
0 & 0
\end{pmatrix}
=: H + E.
\end{equation}
The key result for proving Theorem~\ref{thm: consistency of NBM method dense} is Theorem~\ref{thm: spectrum nbm rel dense graphs} below, which establishes a connection between spectra of $H+E$ and $H$. The spectrum of $H$ is closely related to the spectrum of the adjacency matrix, and is discussed in Section~\ref{sec: spectrum of H}.

To prove Theorem~\ref{thm: spectrum nbm rel dense graphs}, we only
need a crude bound on $\|A-\E A\|$ that is known to hold for very
general graph models,  including SBM, DCSBM and inhomogeneous
Erdos-Renyi models \cite{le2017concentration}. For clarity, we put this bound in Assumption~\ref{as: concentration of adjacency matrix} below. We will replace it with a sharper bound in Theorem~\ref{thm: matrix concentration} to prove Theorem~\ref{thm: consistency of NBM method dense}.
\begin{assumption}\label{as: concentration of adjacency matrix}
With probability at least $1-1/n$, the following inequality holds
\begin{equation*}
\|A-\E A\| \le C\sqrt{d}.
\end{equation*} 
\end{assumption}
It is easy to see that Assumption~\ref{as: concentration of adjacency matrix} implies $\|E\| = O(1/\sqrt{d})$ with high probability while \cite{Wang&Wood2017} shows that $H$ is diagonalizable as follows.

\subsection{Spectrum of $H$}\label{sec: spectrum of H}
Denote by $v_1,...,v_n$ and $\l_1,\l_2,...,\l_n$ eigenvectors and corresponding eigenvalues of $A/\sqrt{\alpha}$ ordered so that $|\l_1|\ge |\l_2| \ge...\ge|\l_n|$. For each $i$, $H$ has two eigenvalues $\mu_{2i-1}$ and $\mu_{2i}$ that are solutions of equation $\mu^2 - \lambda_i \mu +1 = 0$, that is 
\begin{equation}\label{eq: mu formula}
\mu_{2i-1} = \frac{\l_i + \sqrt{\l_i^2 - 4}}{2}, \quad
\mu_{2i} = \frac{\l_i - \sqrt{\l_i^2 - 4}}{2}.
\end{equation}
The corresponding left (unit) eigenvectors of $H$ are
$$
y_{2i-1}^* = \frac{1}{\sqrt{1+|\mu_{2i-1}|^2}}(-\mu_{2i-1}v_i^T, v_i^T), \quad
y_{2i}^* = \frac{1}{\sqrt{1+|\mu_{2i}|^2}}(-\mu_{2i}v_i^T, v_i^T)
$$
and their inner product is
\begin{equation}\label{eq: y inner product}
\ip{y_{2i-1}}{y_{2i}} = 
\begin{cases}
\frac{\l_i^2+\l_i\sqrt{\l_i^2-4}}{4},&\text{if }
|\l_i|<2 \\
\frac{2}{|\l_i|}, & \text{if }
|\l_i|\ge 2
\end{cases} \ = \
\begin{cases}
\frac{\l_i\mu_{2i-1}}{2}, & \text{if }
|\l_i|<2 \\
\frac{2}{|\l_i|}, & \text{if }
|\l_i|\ge 2.
\end{cases}
\end{equation}
The corresponding right eigenvectors of $H$ are proportional to 
\begin{equation}\label{eq: x formula}
x_{2i-1} = \frac{\sqrt{1+|\mu_{2i-1}|^2}}{\mu_{2i}-\mu_{2i-1}} 
\begin{pmatrix}
v_i\\
\mu_{2i}v_i
\end{pmatrix}, \quad
x_{2i} = \frac{\sqrt{1+|\mu_{2i}|^2}}{\mu_{2i-1}-\mu_{2i}} 
\begin{pmatrix}
v_i\\
\mu_{2i-1}v_i
\end{pmatrix},
\end{equation}
with inner product
\begin{equation}\label{eq: x inner product}
\ip{x_{2i-1}}{x_{2i}} = 
\begin{cases}
\frac{\l_i^2+\l_i\sqrt{\l_i^2-4}}{\l_i^2-4},&\text{if }
|\l_i|<2 \\
\frac{2|\l_i|}{4-\l_i^2}, & \text{if }
|\l_i|\ge 2
\end{cases} \ = \
\begin{cases}
\frac{2\l_i\mu_{2i-1}}{\l_i^2-4}, & \text{if }
|\l_i|<2 \\
\frac{2|\l_i|}{4-\l_i^2}, & \text{if }
|\l_i|\ge 2
\end{cases}
\end{equation}
Note that $x_{2i-1}$ and $x_{2i}$ are not unit vectors. Their squared norms are
\begin{equation}\label{eq: norm of x}
\|x_{2i-1}\|^2 = \|x_{2i}\|^2 = 
\begin{cases}
\frac{4}{4-\l_i^2},&\text{if }
|\l_i|<2 \\
\frac{\l_i^2}{\l_i^2 - 4}, & \text{if }
|\l_i|\ge 2.
\end{cases}
\end{equation}
%This mean that as $|\l_i|\rightarrow 2$, $y_{2i-1}$ and $y_{2i}$ become identical while $x_{2i-1}$ and $x_{2i}$ tend to be of opposite sign. 
%Let $Y$ be the matrix with row vectors $y_i$. Then $X = (x_1,...,x_{2n}) =  Y^{-1}$ has the column vectors of the form
%Matrix $H$ is diagonalizable in the sense that $YHX = \diag(\mu_1,...,\mu_{2n})$.
It is convenient to not normalize $x_{2i-1}$ and $x_{2i}$ because $H$ admits the decomposition
$$
H = \sum_{i=1}^n \left(\mu_{2i-1}x_{2i-1}y^*_{2i-1}+\mu_{2i}x_{2i}y^*_{2i}\right).
$$
Note that from the formulas above we have
$$
x_{2i-1}\perp y_{2i}, \quad x_{2i}\perp y_{2i-1}, \quad \ip{x_{2i-1}}{y_{2i-1}} = \ip{x_{2i}}{y_{2i}} = 1. 
$$
The space $\mathbb{C}^{2n}$ can be decomposed as a direct sum of orthogonal two-dimensional subspaces $\text{span}\{x_{2i-1},x_{2i}\}=\text{span}\{y_{2i-1},y_{2i}\}$, which are invariant under the action of $H$. Moreover, the orthogonal projection onto $\text{span}\{x_{2i-1},x_{2i}\}$ is given by $x_{2i-1}y^*_{2i-1}+x_{2i}y^*_{2i}$.

\subsection{Spectrum of $H+E$} 
The main difficulty of analyzing the spectrum of $H+E$ is that $H$ and $E$ are not symmetric so standard Weyl's inequalities do not apply even though $\|E\|$ is small. 
%Therefore instead of using Weyl's inequality to bound the discrepancy between eigenvalues of $H$ and $H+E$, which only holds for symmetric matrices, we need the following theorem.
%
%\begin{theorem}[Bauer-Fike theorem]
%If $H$ is diagonalizable by $Y$ then 
%$$
%S(H+E): = \max_j\min_i |\mu_j(H+E)-\mu_i(H)| \le \|E\|\cdot \|Y\| \cdot \|Y^{-1}\|.
%$$
%Denote by $C_i = B(\mu_i(H),R)\subseteq\mathbb{C}$ the ball centered at $\mu_i(H)$ with radius $R = \|E\|\cdot \|Y\| \cdot \|Y^{-1}\|$. Let $I$ be a set of indices such that
%$$
%\left(\cup_{i\in I}C_i \right) \cap \left( \cup_{i\not\in I}C_i \right) = \emptyset.
%$$
%Then the number of eigenvalues of $H+E$ in $\cup_{i\in I}C_i$ is exactly $|I|$.
%\end{theorem}
%
%It is easy to see that $\|E\| = O(\sqrt{d})$. Therefore to use Bauer-Fike theorem, one needs to bound the condition number $\|Y\|\cdot\|Y^{-1}\|$, which is the ratio of the largest and smallest eigenvalues of $YY^*$. 
%It is shown in \cite{Wang&Wood2017} that eigenvalues of $YY^*$ are
%$$
%1\pm \frac{2}{|\l_i|} \quad \text{ if } \quad |\l_i|\ge 2 \quad \text{ and } \quad 1\pm \frac{|\l_i|}{2} \quad \text{ if } |\l_i| < 2.
%$$
%The condition number becomes very large if there exist eigenvalues $\l_i$ near the edge of the spectrum of $A/\sqrt{\alpha}$, i.e. $|\l_i|\approx 2$. 
%In fact, it is shown in \cite{Wang&Wood2017} that the condition number is of order $\sqrt{n/d}$, and the upper bound in Bauer-Fike theorem is controlled only for average degree at least of order $n^{5/6}$. 
Wang and Wood \cite{Wang&Wood2017} use the Bauer-Fike theorem instead
and show that for Erdos-Renyi random graphs, the perturbation of $E$
is negligible if the average degree is at least of order
$n^{5/6}$. This strong assumption is likely an artifact of their proof
because the Bauer-Fike bound is often not tight. In fact, by a direct and more careful analysis we show in the following  theorem that the spectrum of $H+E$ is close to the spectrum of $H$ for much sparser graphs.

\begin{theorem}[Connection between spectra of non-backtracking and adjacency matrices]\label{thm: spectrum nbm rel dense graphs}
There exists a constant $C>0$ such that the following holds. Consider random graphs satisfying Assumptions~\ref{as: same node degree} and \ref{as: concentration of adjacency matrix}. Then with probability at least $1-1/n$, for each eigenvalue $\beta$ of $H+E$, there exists an eigenvalue $\mu$ of $H$ such that 
$$|\beta - \mu| \le C d^{-1/8}.$$
\end{theorem}

For proving Theorem~\ref{thm: consistency of NBM method dense}, we replace Assumption~\ref{as: concentration of adjacency matrix} with the following shaper bound on $\|A-\E A\|$, which holds under stronger assumptions. This bound follows directly from \cite{Benaych-georges&Bordenave&Knowles2017} and \cite{VanVuRanDiscMat2008}; see also \cite{Wang&Wood2017}.
\begin{theorem}[Concentration of adjacency matrix]\label{thm: matrix concentration}
There exists a constant $C_1,C_2>0$ such that the following holds. Assume that 
$$
d\ge C_1\log n \quad \text{and} \quad d^5\max_{i,j} \E A_{ij} \le n^{-1/13}.
$$ 
Then with probability at least $1-1/n$, we have
$$
\|A-\E A\| \le 2\sqrt{d} + C_2\sqrt{\log n}.
$$
\end{theorem} 

We are now ready to prove Theorem~\ref{thm: consistency of NBM method dense}.

\begin{proof}[Proof of Theorem~\ref{thm: consistency of NBM method dense}]
Let $\l_1(\E A),...,\l_K(\E A)$ be the nonzero eigenvalues of $\E A$ and $\l_1(A),\cdots,\l_n(A)$ be eigenvalues of $A$, ordered so that $|\l_1(\E A)|\ge\cdots\ge|\l_K(\E A)|>0$ and $|\l_1(A)|\ge\cdots\ge|\l_n(A)|$. Then by Weyl's inequality and Theorem~\ref{thm: matrix concentration}, with probability at least $1-1/n$ we have
\begin{eqnarray*}
|\l_i(A)| &\le& 2\sqrt{d} +C\sqrt{\log n} \quad \mathrm{for} \quad i\ge K+1,  \\
|\l_i(A) - \l_i(\E A)| &\le& 2\sqrt{d} +C\sqrt{\log n} \quad \mathrm{for} \quad 1\le i \le K.
\end{eqnarray*} 
Since $|\l_K(\E A)|\ge 4\sqrt{d}+4C(\sqrt[4]{d}+\sqrt{\log n})$ by Assumption~\ref{as: lowrank EA}, it follows that $|\l_i(A)|\ge  2\sqrt{d} +2C(\sqrt[4]{d}+\sqrt{\log n})$ for $1\le i\le K$. Therefore for $1\le i\le K$, from \eqref{eq: mu formula} we have
\begin{eqnarray*}
\max\{|\mu_{2i-1}(H)|,|\mu_{2i}(H)|\} &\ge& 1+\frac{2C(\sqrt[4]{d}+\sqrt{\log n})^{1/2}}{d^{1/4}}>1,\\
\min\{|\mu_{2i-1}(H)|,|\mu_{2i}(H)|\} &=& \frac{1}{\max\{|\mu_{2i-1}(H)|,|\mu_{2i}(H)|\}} <1.
\end{eqnarray*}
Similarly, for $i\ge K+1$ we have
$$
\max\{|\mu_{2i-1}(H)|,|\mu_{2i}(H)|\} < 1+2C\left(\frac{\log n}{d}\right)^{1/4}.
$$
Theorem~\ref{thm: spectrum nbm rel dense graphs} and the continuity of eigenvalues with respect to small perturbation then imply that for $1\le i\le K$,
\begin{eqnarray*}
\max\{|\mu_{2i-1}(H+E)|,|\mu_{2i}(H+E)|\} &\ge& 1+ \frac{2C(\sqrt[4]{d}+\sqrt{\log n})^{1/2}}{d^{1/4}} - C d^{-1/8} \\ 
&\ge& 1+2C\left(\frac{\log n}{d}\right)^{1/4}+Cd^{-1/8},
\end{eqnarray*}
while the remaining eigenvalues of $H+E$ have magnitude at most 
$$
1+2C\left(\frac{\log n}{d}\right)^{1/4}+Cd^{-1/8}.
$$
Since $B = \sqrt{\alpha}(H+E)$ by \eqref{eq: NB def} and \eqref{eq: scaled NBM}, it follows that the nonbacktracking matrix has exactly $K$ eigenvalues with magnitude at least $(1+\varepsilon)\sqrt{d}$ and the remaining eigenvalues are of magnitude smaller than $(1+\varepsilon)\sqrt{d}$.

To show that the $K$ largest eigenvalues in magnitude of $B$ are real, we use the following deterministic inclusion bound for the spectrum of $B$; see \cite[Theorem 3.7]{Angel&Friedman&Hoory2015}. Let $d_{\min}\ge 2$ and $d_{\max}$ be the minimal and maximal degrees of a graph. Then the spectrum of $B$ satisfies
$$
\sigma(B)\subseteq\left\{\lambda\in\C:\sqrt{d_{\min}-1}\le |\lambda| \le\sqrt{d_{\max}-1}
\right\}\cap\left\{\lambda\in\R:1\le|\lambda|\le d_{\max}-1\right\}.
$$ 
In our setting, we bound $d_{\max}$ using standard Bernstein's inequality: with probability at least $1-1/n$, 
$$
\sqrt{d_{\max}-1} \le \sqrt{d + C\sqrt{d\log n}} \le (1+\varepsilon)\sqrt{d}.
$$
Since all complex eigenvalues of $B$ are contained in a circle of radius at most $\sqrt{d_{\max}-1}$, the $K$ largest eigenvalues of $B$ in magnitude, which are outside the circle of radius $(1+\varepsilon)\sqrt{d}$, must be real. The proof is complete. 
\end{proof}

The rest of this section is devoted to proving Theorem~\ref{thm: spectrum nbm rel dense graphs}. Besides the facts listed in Section~\ref{sec: spectrum of H}, we need the following elementary lemmas, the proofs of which are postponed until the end of this section.

\begin{lemma}\label{lem: nonsingularity bound}
Let $x,y,v$ be unit vectors with $|\ip{x}{y}|\le 1-\varepsilon$ for some $\varepsilon\in[0,1]$, $v\in\text{span}\{x,y\}$ and $a,b\in \mathbb{C}$ be any complex numbers. Then
$$
\|ax+by\|^2 \ge \varepsilon(|a|^2+|b|^2), \quad
|\ip{v}{x}|^2 + |\ip{v}{y}|^2 \ge \varepsilon.
$$
\end{lemma}

\begin{lemma}\label{lem: span x}
Let $x_{2i-1},x_{2i}$ be right eigenvectors of $H$ given by \eqref{eq: x formula}.  Then for any $a,b\in\mathbb{C}$ and $1\le i \le n$ we have
$$
\|ax_{2i-1}+bx_{2i}\| \ge \max\{|a|,|b|\}.
$$
\end{lemma}

\begin{lemma}\label{lem: H is bounded}
Let  $x_{2i-1},x_{2i}$ be right eigenvectors of $H$ given by \eqref{eq: x formula} and denote $W_i = \text{span}\{x_{2i-1},x_{2i}\}$. Then for any $1\le i \le n$ we have
$$
\sup_{w\in W_i} \|Hw\| \le 4\max\{|\l_i|,1\}\cdot\|w\|.  
$$
\end{lemma}

We are now ready to prove Theorem~\ref{thm: spectrum nbm rel dense graphs}.

\begin{proof}[Proof of Theorem~\ref{thm: spectrum nbm rel dense graphs}]
%To deal with eigenvalues $\l_i$ near the edge, we proceed as follows. For a fixed $\varepsilon\in (0,1)$, denote
%$$
%B = \sum_{||\l_i|-2|>\varepsilon} \lambda_i v_i v_i^T, \quad
%\hat{H} = 
%\begin{pmatrix}
%B & -I \\
%I & 0
%\end{pmatrix}. 
%$$
%Note that $B$ is an approximation of $A$ where eigenvalues (and corresponding eigenvectors) that are of distance at most $\varepsilon$ from $\pm 2$ are removed. Following the argument of \cite{Wang&Wood2017}, we obtain that eigenvalues of $\hat{H}$ are
%$$
%1\pm \frac{2}{|\l_i(B)|} \quad \text{ if } \quad |\l_i(B)|\ge 2 \quad \text{ and } \quad 1\pm \frac{|\l_i(B)|}{2} \quad \text{ if } |\l_i(B)| < 2.
%$$
%Since $\l_i(B)$ are at least $\varepsilon$ away from $\pm 2$, the condition number of the transform $\hat{Y}$ that diagonalizes $\hat{H}$ is bounded by $1/\varepsilon$, i.e.
%$$
%\|\hat{Y}\|\cdot\|\hat{Y}^{-1}\| \le  1/\varepsilon.
%$$
%By Bauer-Fike theorem, for every eigenvalue $\mu_i(\hat{H}+E)$ there exists an eigenvalue of.
%\textcolor{blue}{Unfortunately, this analysis will not provide a sharp bound on the radius of the bulk of the spectrum of nonbacktracking matrix (and even for this crude bound, it is not clear if any ready to be used inequality exists).} 
Denote by $P_i$ the orthogonal projection onto $\text{span}\{x_{2i-1},x_{2i}\}$. Let $u$ be a unit eigenvector of $H+E$ with corresponding eigenvalue $\beta$ and $u_i = P_iu/\|P_iu\|$. 
Note first that
$$
u = \sum_i P_iu = \sum_i (x_{2i-1}y_{2i-1}^* + x_{2i}y_{2i}^*) P_i u. 
$$   
This allows us to write $Eu$ as follows:
$$
Eu = \beta u - Hu = \sum_i \left[(\beta - \mu_{2i-1})x_{2i-1}y_{2i-1}^* + (\beta -\mu_{2i})x_{2i}y_{2i}^*\right] P_i u.
$$
Note that the terms in above sum belong to orthogonal subspaces of $\mathbb{C}^{2n}$. Therefore 
\begin{equation}\label{eq: master bound}
\|E\|^2 \ge \sum_i \left\| \left[(\beta-\mu_{2i-1})x_{2i-1}y_{2i-1}^* + (\beta-\mu_{2i})x_{2i}y_{2i}^*\right] u_i \right\|^2 \|P_i u\|^2
= \sum_i T_i  \|P_i u\|^2
\end{equation}
where $T_i$ denotes the first factor of the corresponding term in the sum. 

\medskip
Let $\varepsilon\in(0,1/4)$ be a small number to be chosen later. Consider first the eigenvalues $\l_i$ with magnitude not close to $2$, namely those satisfying $||\l_i|-2|>\varepsilon$. From \eqref{eq: x inner product} and \eqref{eq: norm of x} we have
\begin{equation}\label{eq: x and y inner products}
|\ip{y_{2i-1}}{y_{2i}}| = 
\frac{|\ip{x_{2i-1}}{x_{2i}}|}{\|x_{2i-1}\|\cdot \|x_{2i}\|} \ = \ 
\begin{cases}
|\l_i|/2, & \text{if } |\l_i| < 2- \varepsilon \\
2/|\l_i|, & \text{if } |\l_i| > 2+ \varepsilon
\end{cases}
\ \le \ 1 - \varepsilon/3.
\end{equation}
It also follows from \eqref{eq: norm of x} that $\|x_{2i-1}\| = \|x_{2i}\|>1$. Since $u_i\in \text{span}\{x_{2i-1},x_{2i}\}=\text{span}\{y_{2i-1},y_{2i}\}$, if $||\l_i|-2|>\varepsilon$ then by \eqref{eq: x and y inner products} and Lemma~\ref{lem: nonsingularity bound} (applied to $\|x_{2i-1}\|^{-1}x_{2i-1}$, $\|x_{2i}\|^{-1}x_{2i}$ first and then to $y_{2i-1},y_{2i}$) we have
\begin{eqnarray}
\nonumber T_i &\ge& \varepsilon/3\cdot\left(|\beta - \mu_{2i-1}|^2 |y_{2i-1}^*u_i|^2 + |\beta - \mu_{2i}|^2 |y_{2i}^*u_i|^2\right)\cdot\|x_{2i}\|^2\\
\label{eq: Ti bound}&\ge& \varepsilon^2/9\cdot\min\{|\beta-\mu_{2i-1}|^2,|\beta-\mu_{2i}|^2\}.  
\end{eqnarray}
We now consider two cases of $u$, namely whether the following inequality holds: 
\begin{equation}\label{eq: two cases of u}
\sum_{||\l_i|-2|>\varepsilon} \|P_i u\|^2 > \varepsilon.
\end{equation} 
We will show that in both cases there exists an eigenvalue of $H$ that is close  to $\beta$. Assume first that \eqref{eq: two cases of u} holds. Then from 
\eqref{eq: master bound}, \eqref{eq: Ti bound} and \eqref{eq: two cases of u} we have
\begin{eqnarray*}
\|E\|^2 &\ge& \sum_{||\l_i|-2|>\varepsilon} T_i \cdot \|P_i u\|^2 \\
&\ge& \sum_{||\l_i|-2|>\varepsilon} \varepsilon^2/9\cdot\min\{|\beta-\mu_{2i-1}|^2,|\beta-\mu_{2i}|^2\}\cdot \|P_i u\|^2 \\
&\ge& \varepsilon^2/9\cdot\min_{||\l_i|-2|>\varepsilon}\{|\beta-\mu_{2i-1}|^2,|\beta-\mu_{2i}|^2\}\cdot \sum_{||\l_i|-2|>\varepsilon} \|P_i u\|^2 \\
&\ge& \varepsilon^3/9\cdot\min_{||\l_i|-2|>\varepsilon}\{|\beta-\mu_{2i-1}|^2,|\beta-\mu_{2i}|^2\}.
\end{eqnarray*}
It follows that there exists $i$ with $||\l_i|-2|>\varepsilon$ such that
\begin{equation}\label{eq: first case conclusion}
\min\{|\mu_{2i-1}-\beta|^2,|\mu_{2i}-\beta|^2\} \ \le \ \frac{9\|E\|^2}{\varepsilon^3}.
\end{equation} 

We now consider the second case of $u$ when \eqref{eq: two cases of u} does not hold, or equivalently
\begin{equation}\label{eq: 2nd case u}
\sum_{||\l_i|-2|\le\varepsilon} \|P_i u\|^2 > 1-\varepsilon.
\end{equation}
We partition the set of indices $i$ satisfying $||\l_i|-2|\le\varepsilon$ as a union of $J$ and $I$, where
$J$ is the set of indices $i$ such that $||\l_i|-2|\le\varepsilon$ and $\max\{|y_{2i-1}^*u_i|,|y_{2i}^*u_i|\} > \varepsilon$, and $I$ is the set of indices $i$ such that $||\l_i|-2|\le\varepsilon$ and $\max\{|y_{2i-1}^*u_i|,|y_{2i}^*u_i|\} \le \varepsilon$. It follows from \eqref{eq: 2nd case u} that at least one of the following inequalities hold:
$$
\sum_{i\in J} \|P_i u\|^2 > \varepsilon, \quad \sum_{i\in I} \|P_i u\|^2 > 1-2\varepsilon.
$$
If the first inequality holds then by \eqref{eq: master bound} and Lemma~\ref{lem: span x} we have
\begin{eqnarray*}
\|E\|^2 &\ge& \sum_{i\in J} T_i\cdot\|P_i u\|^2 \\
&\ge& \sum_{i\in J} \max\left\{|(\beta-\mu_{2i-1})y_{2i-1}^*u_i|^2,|(\beta-\mu_{2i})y_{2i}^*u_i|^2\right\}\cdot \|P_iu\|^2\\
&\ge& \min_{i\in J}\max\left\{|(\beta-\mu_{2i-1})y_{2i-1}^*u_i|^2,|(\beta-\mu_{2i})y_{2i}^*u_i|^2\right\}\cdot \sum_{i\in J}  \|P_iu\|^2\\
&\ge& \varepsilon \cdot \min_{i\in J}\max\left\{|(\beta-\mu_{2i-1})y_{2i-1}^*u_i|^2,|(\beta-\mu_{2i})y_{2i}^*u_i|^2\right\}.
\end{eqnarray*}
Since $\max\{|y_{2i-1}^*u_i|,|y_{2i}^*u_i|\} > \varepsilon$ for $i\in J$, it follows that there exists $i\in J$ such that
\begin{equation}\label{eq: 2nd case conclusion 1}
\min\{|\beta-\mu_{2i-1}|^2,|\beta-\mu_{2i}|^2\} \le \frac{\|E\|^2}{\varepsilon^3}.
\end{equation}

We now assume that the following inequality holds:
\begin{equation}\label{eq: u concentrates on I}
\sum_{i\in I} \|P_i u\|^2 > 1-2\varepsilon.
\end{equation}
This inequality implies that $|\beta|$ is bounded. Indeed, from identities $(H+E)u = \beta u$ and $u=\sum_i \|P_i u\| u_i$ we get
\begin{equation}\label{eq: H+E eigenvector}
\sum_{i} \|P_i u\| H u_i + Eu = \beta\sum_i \|P_iu\|u_i.
\end{equation}
Note that $Hu_i\in \text{span}\{x_{2i-1},x_{2i}\}$ because $u_i\in \text{span}\{x_{2i-1},x_{2i}\}$ and $\{x_{2i-1},x_{2i}\}$ are eigenvectors of $H$. Denote $P_I = \sum_{i\in I}P_i$ and apply $P_I$ to both sides of \eqref{eq: H+E eigenvector}, we have
$$
\sum_{i\in I} \|P_i u\| H u_i + P_IEu = \beta\sum_{i\in I} \|P_iu\|u_i.
$$
If $i\in I$ then $H$ is bounded on $\text{span}\{x_{2i-1},x_{2i}\}$ by Lemma~\ref{lem: H is bounded}. Therefore from \eqref{eq: u concentrates on I} we obtain
\begin{eqnarray*}
(1-2\varepsilon)^{1/2}|\beta| \ \le \ \Big\|\beta\sum_{i\in I}\|P_i u\|u_i\Big\| 
\ \le \ \Big\|\sum_{i\in I} \|P_i u\| H u_i\Big\| + \|P_IEu\| 
\ \le \ C + \|E\|.
\end{eqnarray*}
Since $\varepsilon\le1/4$ and $\|E\|\le 1$, this implies $|\beta|\le 2C$. Applying $P_{I^c} = \sum_{i\not\in I}P_i$ to both sides of \eqref{eq: H+E eigenvector}, using \eqref{eq: u concentrates on I} and the boundedness of $\beta$, we have
\begin{eqnarray}\label{eq: H bounded on I^c}
\Big\|\sum_{i\in I^c} \|P_i u\| H u_i \Big\| \le \|P_{I^c} Eu \| + |\beta|\cdot \Big\|P_{I^c}\sum_{i\in I^c} \|P_iu\|u_i\Big\| \le \|E\| + C\sqrt{2\varepsilon}.
\end{eqnarray}
Therefore using $(H+E)u=\beta u$ and inequalities \eqref{eq: u concentrates on I}, \eqref{eq: H bounded on I^c} we have 
\begin{eqnarray}
\nonumber\Big\|\beta u - (H+E)\sum_{i\in I} \|P_iu\| u_i \Big\|
\nonumber&=& \Big\| \sum_{i\in I^c}\|P_iu\| H u_i + E\sum_{i\in I^c} \|P_iu\| u_i \Big\| \\
\nonumber&\le& (\|E\|+C\sqrt{2\varepsilon}) + \|E\| \\
\label{eq: truncate  u}&\le& 2C(\sqrt{\varepsilon}+\|E\|). 
\end{eqnarray}
Denote $\bar{x}_{2i-1} = \|x_{2i-1}\|^{-1} x_{2i-1}$ and $\bar{x}_{2i} = \|x_{2i}\|^{-1} x_{2i}$. Since $\bar{x}_{2i-1}\perp y_{2i}$,  $\bar{x}_{2i}\perp y_{2i-1}$ and $\max\{|y_{2i-1}^*u_i|,|y_{2i}^*u_i|\} \le \varepsilon$ for $i\in I$, it follows that 
$|\ip{u_i}{\bar{x}_{2i-1}}| \ge 1-2\varepsilon$ and $|\ip{u_i}{\bar{x}_{2i}}| \ge 1-2\varepsilon$. 
By multiplying $\bar{x}_{2i}$ with a complex number of magnitude one if necessary, we may assume that $\ip{u_i}{\bar{x}_{2i}}\ge 1-2\varepsilon$ for $i\in I$, and consequently
\begin{equation}\label{eq: ui xbar close}
\|u_i-\bar{x}_{2i}\|^2\le 4\varepsilon.
\end{equation} 
%Note that $H$ has $2n$ eigenvalues $\mu_1,\mu_2,...,\mu_{2n}$ that depend on eigenvalues $\l_1\ge\l_2\ge\cdots\ge\l_n$ of $A$ via \eqref{eq: mu formula}. From \eqref{ass: low rank assumption} and \eqref{ass: adjacency concentration} and Weyl's inequality it follows that $|\l_i| \le C$ for $i\ge k+1$ and some constant $C>0$.  By \eqref{eq: mu formula} this implies
%\begin{equation}\label{eq: boundedness of mu_i}
%\max_{i\ge k+1}\{|\mu_{2i-1},|\mu_{2i}|\} \le C.
%\end{equation}
%Let $[2k] = \{1,2,...,2k\}$ and $P_{[2k]}$ be the projection onto the subspace spanned by $x_1,...,x_{2k}$.
%Apply $P_{[2k]}$ to both side of \eqref{eq: H+E eigenvector} and use \eqref{eq: u concentrates on I}, we get
%\begin{equation}\label{eq: P2k bound}
%\Big\|\sum_{i\in [2k]}\|P_i u\| \cdot Hu_i\Big\| \ \le \  \|P_{[2k]} Eu\| + |\beta|\cdot \Big\|P_{[2k]}\sum_{i\in [2k]}\|P_i u\| \cdot u_i\Big\| \ \le \ \|E\|+C\sqrt{2\varepsilon}.
%\end{equation}
%Since 
%$$
%\|Hu_1\| \ge \|H\bar{x}_1\| - \|H(u_1-\bar{x}_1)\|
%\ge |\mu_1| - \sqrt{2\varepsilon}\|H\| \asymp |\mu_1|, 
%$$
%it follows that 
%$
%\|P_1u\| \le C|\mu|^{-1}(\|E\| + \beta\|P_1u\|)
%$.
%Since $\beta$ is bounded, we obtain 
%$$\|P_1u\| \le C\|E\|\cdot|\mu|^{-1}\asymp C\|E\|\cdot\|H\|^{-1}.$$
We are now ready to show that $\beta$ is close to an eigenvalue of $H$. 
By \eqref{eq: ui xbar close}, \eqref{eq: u concentrates on I}, \eqref{eq: truncate  u}, the fact that $\beta$ and $\mu_{2i}$ are bounded for $i\in I$, and triangle inequality we have
\begin{eqnarray*}
\Big\|\sum_{i\in I} \|P_i u\|(\mu_{2i}-\beta) u_i\Big\| &=& \Big\| \sum_{i\in I} \|P_i u\|\mu_{2i} u_i -  \sum_{i\in I} \|P_i u\|\beta u_i \Big\|\\
&\le& \Big\| \sum_{i\in I} \|P_i u\|\mu_{2i} \bar{x}_{2i} -  \sum_{i\in I} \|P_i u\|\beta u_i \Big\| + C\sqrt{4\varepsilon} \\
&\le& \Big\| \sum_{i\in I} \|P_i u\|\mu_{2i} \bar{x}_{2i} -  \sum_{i=1}^n \|P_i u\|\beta u_i \Big\| + C(\sqrt{4\varepsilon}+\sqrt{2\varepsilon}) \\
&=& \Big\| H \sum_{i\in I} \|P_i u\|\ \bar{x}_{2i} -  \beta u \Big\| +C(\sqrt{4\varepsilon}+\sqrt{2\varepsilon}) \\
&\le& \Big\| H \sum_{i\in I} \|P_i u\|\ u_{i} -  \beta u \Big\| +C(2\sqrt{4\varepsilon}+\sqrt{2\varepsilon}) \\
&\le& \Big\| (H+E) \sum_{i\in I} \|P_i u\|\ u_{i} -  \beta u \Big\| +C(2\sqrt{4\varepsilon}+\sqrt{2\varepsilon}) +\|E\| \\
&\le&2C(\sqrt{\varepsilon}+\|E\|) + C(2\sqrt{4\varepsilon}+\sqrt{2\varepsilon}) +\|E\| \\
&\le& 8C(\sqrt{\varepsilon} +\|E\|).
\end{eqnarray*}
Together with \eqref{eq: u concentrates on I} this implies  
\begin{equation}\label{eq: 2nd case conclusion 2}
\min_{i\in I} |\beta-\mu_{2i}|^2 \le \frac{1}{1-2\varepsilon}\cdot\sum_{i\in I} \|P_i u\|^2|\beta-\mu_{2i}|^2 \le C(\varepsilon+\|E\|^2).
\end{equation}
Finally, it follows from \eqref{eq: first case conclusion}, \eqref{eq: 2nd case conclusion 1} and \eqref{eq: 2nd case conclusion 2} that if $\beta$ is an eigenvalue of $H+E$ then there exists an eigenvalue $\mu$ of $H$ such that 
$$
|\beta - \mu| \ \le \ \frac{C(\|E\|+\varepsilon^2)}{\varepsilon^{3/2}}  \ = \ 2C\|E\|^{1/4}
$$ 
for $\varepsilon = \|E\|^{1/2}$. It follows from Assumption~\ref{as: concentration of adjacency matrix} that $\|E\| = O(1/\sqrt{d})$ and therefore the proof is complete.
\end{proof}

\begin{proof}[Proof of Lemma~\ref{lem: nonsingularity bound}]
We prove the first inequality:
\begin{eqnarray*}
\|ax+by\|^2 &=& |a|^2 + |b|^2 +2\cdot\text{Re}\{\bar{a}b\ip{x}{y}\} \\
&\ge& |a|^2 + |b|^2 - 2|ab|(1-\varepsilon) \\
&=& (1-\varepsilon)(|a|-|b|)^2 + \varepsilon(|a|^2+|b|^2)\\
&\ge& \varepsilon(|a|^2+|b|^2).
\end{eqnarray*}
To prove the second inequality, denote $z = x-y$ and $w=x+y$. Then $z,w$ are perpendicular and $x = (z+w)/2$, $y = (w-z)/2$. Therefore
\begin{eqnarray*}
|\ip{v}{x}|^2 + |\ip{v}{y}|^2 = v^*(xx^*+yy^*)v 
=v^*(zz^*+ww^*)v/2.
\end{eqnarray*}
Note that the restriction of $zz^*+ww^*$ on $\text{span}\{x,y\}$ is a positive definite matrix with eigenvalues $\|z\|^2$ and $\|w\|^2$ because $z$ and $w$ are perpendicular. By the first inequality
$$
\min\{\|z\|^2,\|w\|^2\} = \min\{\|x-y\|^2,\|x+y\|^2\}
\ge 2\varepsilon.$$
Since $v\in\text{span}\{x,y\}$, it follows that
$$
v^*(zz^*+ww^*)v/2 \ge 2\varepsilon v^*v/2 = \varepsilon.
$$
The proof is complete.
\end{proof}

\begin{proof}[Proof of Lemma~\ref{lem: span x}]
We decompose $x_{2i-1}$ as $x_{2i-1} = z + w$ where $z\perp x_{2i}$ and $w \in \text{span}\{x_{2i}\}$. Then
$$
\|ax_{2i-1}+bx_{2i}\|^2 = |a|^2 \|z\|^2 + \|aw+bx_{2i}\|^2 \ge |a|^2 \|z\|^2.
$$
To calculate $z$, denote $\bar{x}_{2i-1} = \|x_{2i-1}\|^{-1} x_{2i-1}$, $\bar{x}_{2i} = \|x_{2i}\|^{-1} x_{2i}$ and $\tau = \ip{\bar{x}_{2i-1}}{\bar{x}_{2i}}$. 
From \eqref{eq: x inner product} and \eqref{eq: norm of x} we get
$$
\tau = 
\begin{cases}
 -\frac{\l_i^2+\l_i\sqrt{\l_i^2-4}}{4}, & \text{ if } |\l_i| < 2 \\
-\frac{2}{|\l_i|}, & \text{ if } |\l_i| \ge 2.
\end{cases}
$$
Since $\|x_{2i-1}\| = \|x_{2i}\|$, it follows that
$$
z = x_{2i-1} - \ip{x_{2i-1}}{\bar{x}_{2i}} \bar{x}_{2i} =  x_{2i-1} - \tau x_{2i}.
$$
Therefore by \eqref{eq: norm of x}, we obtain
\begin{eqnarray*}
\|z\|^2 &=& \|x_{2i-1}\|^2 + |\tau|^2 \|x_{2i}\|^2 - 2\text{Re}(\tau\ip{x_{2i-1}}{x_{2i}}) \\
&=& \|x_{2i}\|^2\left( |\tau|^2 + 1 - 2\text{Re}(\tau^2) \right)\\
&=& 
\begin{cases}
\frac{4}{4-\l_i^2}\left(\frac{\l_i^2}{4}+1-\frac{\l_i^4-2\l_i^2}{4}\right),&\text{if }
|\l_i|<2 \\
\frac{\l_i^2}{\l_i^2 - 4}\left(1-\frac{4}{\l_i^2}\right), & \text{if }
|\l_i|\ge 2
\end{cases}\\
&=& 
\begin{cases}
\l_i^2+1,&\text{if }
|\l_i|<2 \\
1, & \text{if }
|\l_i|\ge 2
\end{cases}\\
&\ge& 1.
\end{eqnarray*}
This implies $\|ax_{2i-1}+bx_{2i}\| \ge |a|\cdot\|z\|\ge |a|$. By decomposing $x_{2i}$ instead of $x_{2i-1}$ and repeating the same argument, we obtain $\|ax_{2i-1}+bx_{2i}\| \ge |b|$. The proof is complete.
\end{proof}

\begin{proof}[Proof of Lemma~\ref{lem: H is bounded}]
Since $\bar{x}_{2i-1}=\|x_{2i-1}\|^{-1} x_{2i-1}$ and $y_{2i}$ form an orthonormal basis of $W_i$, it is enough to bound $\|H\bar{x}_{2i-1}\|$ and $\|Hy_{2i}\|$. Note that the restriction $H_i$ of $H$ on $W_i$ has the formula
$$
H_i = \mu_{2i-1}x_{2i-1}y^*_{2i-1}+\mu_{2i}x_{2i}y^*_{2i}.
$$
Therefore $\|H_i\bar{x}_{2i-1}\| = \|\mu_{2i-1}\bar{x}_{2i-1}\|\le |\l_i|$. For the more involved calculation of $Hy_{2i}$ we will repeatedly use identities 
\begin{equation}\label{eq: mu identities}
\mu_{2i-1}\mu_{2i}=1, \quad \mu_{2i-1}+\mu_{2i}=\l_i 
\end{equation}
which follow directly from the formulas of $\mu_{2i-1}$ and $\mu_{2i}$ in \eqref{eq: mu formula}. 

\medskip
\noindent{\bf The case $|\l_i|<2$.}
From \eqref{eq: y inner product}, \eqref{eq: x formula} and identities $|\mu_{2i-1}|=|\mu_{2i}|=1$, $\mu_{2i-1}\mu_{2i}=1$ we have
\begin{eqnarray*}
H_i y_{2i} &=& \frac{\l_i\mu_{2i-1}^2}{\sqrt{2}(\mu_{2i}-\mu_{2i-1})}
\begin{pmatrix}
v_i\\
\mu_{2i}v_i
\end{pmatrix}
+ \frac{\sqrt{2}\mu_{2i}}{\mu_{2i-1}-\mu_{2i}} 
\begin{pmatrix}
v_i\\
\mu_{2i-1}v_i
\end{pmatrix} \\
&=& \frac{1}{\sqrt{2}(\mu_{2i}-\mu_{2i-1})}
\begin{pmatrix}
(\l_i\mu_{2i-1}^2-2\mu_{2i})v_i\\
(\l_i\mu_{2i-1} - 2)v_i
\end{pmatrix}.
\end{eqnarray*}
Using \eqref{eq: mu identities} we get
\begin{eqnarray*}
\l_i\mu_{2i-1}^2 - 2\mu_{2i} &=& (\mu_{2i-1}+\mu_{2i})\mu_{2i-1}^2 - 2\mu_{2i} \\
&=& \mu_{2i-1}^3+\mu_{2i-1} -2\mu_{2i}\\
&=& (\mu_{2i-1}-\mu_{2i})(\mu_{2i-1}^2+1).
\end{eqnarray*}
Similarly, 
$$\l_i\mu_{2i-1} - 2 = (\mu_{2i-1}+\mu_{2i})\mu_{2i-1} - 2 = \mu_{2i-1}^2 - 1 = \mu_{2i-1}(\mu_{2i-1}-\mu_{2i}).$$
Therefore 
$$
\|Hy_{2i}\|^2 = (|\mu_{2i-1}^2+1|^2+|\mu_{2i-1}|^2)/2 \le 5/2.
$$

\medskip
\noindent{\bf The case $\l_i\ge 2$.}
In this case $\mu_{2i-1}$ and $\mu_{2i}$ are real positive numbers. Then from \eqref{eq: y inner product}, \eqref{eq: x formula} and \eqref{eq: mu identities}  we have
\begin{eqnarray*}
H_i y_{2i} &=& 
\frac{2\mu_{2i-1}\sqrt{1+\mu_{2i-1}^2}}{\l_i(\mu_{2i}-\mu_{2i-1})} 
\begin{pmatrix}
v_i\\
\mu_{2i}v_i
\end{pmatrix}
+ \frac{\mu_{2i}\sqrt{1+\mu_{2i}^2}}{\mu_{2i-1}-\mu_{2i}} 
\begin{pmatrix}
v_i\\
\mu_{2i-1}v_i
\end{pmatrix}.
\end{eqnarray*}
It follows from \eqref{eq: mu identities} that 
$$\sqrt{1+\mu_{2i-1}^2} = \mu_{2i-1}\sqrt{1+\mu_{2i}^2}.$$   
Therefore
\begin{eqnarray*}
H_i y_{2i} = 
\frac{\sqrt{1+\mu_{2i}^2}}{\l_i(\mu_{2i}-\mu_{2i-1})} 
\begin{pmatrix}
(2\mu_{2i-1}^2-\l_i\mu_{2i})v_i\\
(2\mu_{2i-1}-\l_i)v_i
\end{pmatrix}
= 
-\frac{\sqrt{1+\mu_{2i}^2}}{\l_i} 
\begin{pmatrix}
(\mu_{2i-1}+\l_i)v_i\\
v_i
\end{pmatrix}.
\end{eqnarray*}
Note that $\mu_{2i}\le 1$ and $\mu_{2i-1}\le \l_i$ by \eqref{eq: mu formula}. Hence
$$
\|H_iy_{2i}\|^2 = \frac{(1+\mu_{2i}^2)(1+(\mu_{2i-1}+\l_i)^2)}{\l_i^2} \le 10. 
$$

\medskip
\noindent{\bf The case $\l_i\le -2$.}
In this case $\mu_{2i-1}$ and $\mu_{2i}$ are real negative numbers. Then from \eqref{eq: y inner product}, \eqref{eq: x formula} and \eqref{eq: mu identities}  we have
\begin{eqnarray*}
H_i y_{2i} &=& 
\frac{2\mu_{2i-1}\sqrt{1+\mu_{2i-1}^2}}{\l_i(\mu_{2i-1}-\mu_{2i})} 
\begin{pmatrix}
v_i\\
\mu_{2i}v_i
\end{pmatrix}
+ \frac{\mu_{2i}\sqrt{1+\mu_{2i}^2}}{\mu_{2i-1}-\mu_{2i}} 
\begin{pmatrix}
v_i\\
\mu_{2i-1}v_i
\end{pmatrix}.
\end{eqnarray*}
It follows from \eqref{eq: mu identities} that 
$$\sqrt{1+\mu_{2i-1}^2} = -\mu_{2i-1}\sqrt{1+\mu_{2i}^2}.$$   
Therefore
\begin{eqnarray*}
H_i y_{2i} = 
\frac{\sqrt{1+\mu_{2i}^2}}{\l_i(\mu_{2i}-\mu_{2i-1})} 
\begin{pmatrix}
(2\mu_{2i-1}^2-\l_i\mu_{2i})v_i\\
(2\mu_{2i-1}-\l_i)v_i
\end{pmatrix}
= 
-\frac{\sqrt{1+\mu_{2i}^2}}{\l_i} 
\begin{pmatrix}
(\mu_{2i-1}+\l_i)v_i\\
v_i
\end{pmatrix}.
\end{eqnarray*}
Note that $\mu_{2i}^2\le \l_i^2$ and $|\mu_{2i-1}|\le 1$ by \eqref{eq: mu formula}. Hence
$$
\|H_iy_{2i}\|^2 = \frac{(1+\mu_{2i}^2)(1+(\mu_{2i-1}+\l_i)^2)}{\l_i^2} \le 10\l_i^2. 
$$
The proof is complete.
\end{proof}

\section{Proof of Theorem~\ref{thm: consistency BH method}}

\begin{proof}[Proof of Theorem~\ref{thm: consistency BH method}]
We first rewrite the Bethe Hessian as follows:
\begin{equation*}
  H(r) = (r^2-1)I - r(A-\E A) + D - r\bar{A} =: \hat{H}(r) - r\E A.
\end{equation*}
We show that eigenvalues of $\hat{H}(r)$ are non-negative and are of smaller order than non-zero eigenvalues of $r\E A$.
This in turn implies that $K$ eigenvalues of $H(r)$ are negative while the rest are positive.

By Theorem~\ref{as: concentration of adjacency matrix}, with probability at least $1-1/n$ we have
\begin{equation}\label{eq: concentration bound}
  \|A-\E A\| \le 2\sqrt{d} + C\sqrt{\log n}.
\end{equation}
To bound the node degrees, we use the standard Bernstein's inequality: with probability at least $1-1/n$,
\begin{equation}\label{eq: degree bound}
  \|D - \E D\| \le C\sqrt{d\log n}, \quad |r^2 - (1+\varepsilon)^2 d| \le C\sqrt{d\log n}.
\end{equation}
For square matrices $X,Y$ we use $X\succeq Y$ to signify that $X-Y$ is positive semidefinite.
%\tcr{[has notation $\succeq$ been defined anywhere? if not add definition here]}
Then by \eqref{eq: concentration bound}, \eqref{eq: degree bound}
and Assumption~\ref{as: lowrank EA}, we have
\begin{eqnarray}\label{eq: positive definite}
\nonumber  \hat{H}(r) &\succeq& \left[(r^2 -1) - r\left(2\sqrt{d} + C\sqrt{\log n}\right) + (1+\varepsilon)^2d - C\sqrt{d\log n}\right] I \\
\nonumber  &\succeq& \left[ \left(r-\sqrt{d}\right)^2 + (2\varepsilon+\varepsilon^2) d - C\sqrt{d\log n} \right] I \\
  &\succeq& 0
\end{eqnarray}
because  $\varepsilon = C\sqrt{\log n/d}$.

For a subspace $U \subseteq \R^n$, we denote by $\mathrm{dim}(U)$ the dimension of $U$, and by $U^\perp$ the orthogonal complement of $U$.
Also, let $\mathrm{col}(\E A)$ be the column space of $\E A$.
Using the Courant min-max principle (see e.g. \cite[Corollary~III.1.2]{Bhatia1996}) and \eqref{eq: positive definite}, we have
%\begin{eqnarray*}
$$
\rho_{n-K}(H(r)) = \max_{\mathrm{dim}(U)=n-K} \ \min_{x \in U,\|x\|=1} \langle H(r)x,x \rangle
  \ge  \min_{x \in \mathrm{col}(\E A)^\perp, \|x\|=1} \langle H(r)x,x \rangle \ge 0.
$$
%\end{eqnarray*}
Therefore the $n-K$ largest eigenvalues of $H(r)$ are non-negative.

It remains to show that the $K$ smallest eigenvalues of $H(r)$ are negative.
From \eqref{eq: concentration bound}, \eqref{eq: degree bound}, and a triangle inequality,
we have
\begin{equation}\label{eq: upper bound of Hhat}
  \|\hat{H}(r)\| \le 4 d + C\sqrt{d\log n}.
\end{equation}
On the other hand, from \eqref{eq: degree bound} and Assumption~\ref{as: lowrank EA} we get
\begin{equation}\label{eq: upper bound eigenvalue Abar}
   \l_{K}(r\E A) \ge (1+\varepsilon)\sqrt{d}\left(4\sqrt{d}+C\sqrt{\log n}\right) \ge 4d + C\sqrt{d\log n}.
\end{equation}
Combining \eqref{eq: upper bound of Hhat}, \eqref{eq: upper bound eigenvalue Abar},
and using the Courant min-max principle again, we conclude that the $K$ smallest eigenvalues of $H(r)$ are negative, which completes the proof.
\end{proof}

\bibliography{allref}

\begin{thebibliography}{10}

\bibitem{Abbe.com.detect.review2018}
E.~Abbe.
\newblock Community detection and stochastic block models: Recent developments.
\newblock {\em Journal of Machine Learning Research}, 18:1--86, 2018.

\bibitem{Adamic05}
L.~A. Adamic and N.~Glance.
\newblock The political blogosphere and the 2004 {US} election.
\newblock In {\em Proceedings of the WWW-2005 Workshop on the Weblogging
  Ecosystem}, 2005.

\bibitem{Airoldi2008}
E.~M. Airoldi, D.~M. Blei, S.~E. Fienberg, and E.~P. Xing.
\newblock Mixed membership stochastic blockmodels.
\newblock {\em J. Machine Learning Research}, 9:1981--2014, 2008.

\bibitem{amini2013pseudo}
A.~A. Amini, A.~Chen, P.~J. Bickel, and E.~Levina.
\newblock Pseudo-likelihood methods for community detection in large sparse
  networks.
\newblock {\em The Annals of Statistics}, 41(4):2097--2122, 2013.

\bibitem{Angel&Friedman&Hoory2015}
O.~Angel, J.~Friedman, and S.~Hoory.
\newblock The non-backtracking spectrum of the universal cover of a graph.
\newblock {\em Transactions of the American Mathematical Society},
  367(6):4287--4318, 2015.

\bibitem{Bass1992}
H.~Bass.
\newblock The {Ihara-Selberg} zeta function of a tree lattice.
\newblock {\em Int J Math}, 3(06):717--797, 1992.

\bibitem{Benaych-georges&Bordenave&Knowles2017}
F.~Benaych-Georges, C.~Bordenave, and A.~Knowles.
\newblock Spectral radii of sparse random matrices.
\newblock {\em arXiv:1704.02945}, 2017.

\bibitem{Bhatia1996}
R.~Bhatia.
\newblock {\em Matrix Analysis}.
\newblock Springer-Verlag New York, 1996.

\bibitem{Bickel&Sarkar2013}
P.~Bickel and P.~Sarkar.
\newblock Hypothesis testing for automated community detection in networks.
\newblock {\em Journal of the Royal Statistical Society: Series B, to appear},
  2013.

\bibitem{Bickel&Chen2009}
P.~J. Bickel and A.~Chen.
\newblock A nonparametric view of network models and {N}ewman-{G}irvan and
  other modularities.
\newblock {\em Proc. Natl. Acad. Sci. USA}, 106:21068--21073, 2009.

\bibitem{Bordenave.et.al2015non-backtracking}
C.~Bordenave, M.~Lelarge, and L.~Massouli\'{e}.
\newblock Non-backtracking spectrum of random graphs: community detection and
  non-regular {R}amanujan graphs.
\newblock {\em The Annals of Probability}, 46(1):1--71, 2018.

\bibitem{Chaudhuri&Chung&Tsiatas2012}
K.~Chaudhuri, F.~Chung, and A.~Tsiatas.
\newblock Spectral clustering of graphs with general degrees in the extended
  planted partition model.
\newblock {\em Journal of Machine Learning Research Workshop and Conference
  Proceedings}, 23:35.1 -- 35.23, 2012.

\bibitem{Chen&Lei2014}
K.~Chen and J.~Lei.
\newblock Network cross-validation for determining the number of communities in
  network data.
\newblock {\em Journal of the American Statistical Association},
  113(521):241--251, 2018.

\bibitem{Daudinetal2008}
J.~Daudin, F.~Picard, and S.~Robin.
\newblock A mixture model for random graphs.
\newblock {\em Statist. Comput.}, 18:173--183, 2008.

\bibitem{Girvan&Newman2002}
M.~Girvan and M.~E.~J. Newman.
\newblock Community structure in social and biological networks.
\newblock {\em Proc. Natl. Acad. Sci.}, 99(12):7821--7826, 2002.

\bibitem{Hashimoto1989}
K.~Hashimoto.
\newblock Zeta functions of finite graphs and representations of p-adic groups.
\newblock {\em Advanced Studies in Pure Mathematics}, 15:211--280, 1989.

\bibitem{Holland83}
P.~W. Holland, K.~B. Laskey, and S.~Leinhardt.
\newblock Stochastic blockmodels: first steps.
\newblock {\em Social Networks}, 5(2):109--137, 1983.

\bibitem{Hu&Qin&Yan&Zhao2019}
J.~Hu, H.~Qin, T.~Yan, and Y.~Zhao.
\newblock Corrected bayesian information criterion for stochastic block models.
\newblock {\em To be published in Journal of the American Statistical
  Association}, 2019.

\bibitem{Karrer10}
B.~Karrer and M.~E.~J. Newman.
\newblock Stochastic blockmodels and community structure in networks.
\newblock {\em Physical Review E}, 83:016107, 2011.

\bibitem{Krzakala.et.al2013spectral}
F.~Krzakala, C.~Moore, E.~Mossel, J.~Neeman, A.~Sly, L.~Zdeborová, and
  P.~Zhang.
\newblock Spectral redemption in clustering sparse networks.
\newblock {\em Proc. Natl. Acad. Sci.}, 110(52):20935--20940, 2013.

\bibitem{Latouche&Birmelel&Ambroise2012}
P.~Latouche, E.~Birmel\'{e}, and C.~Ambroise.
\newblock Variational bayesian inference and complexity control for stochastic
  block models.
\newblock {\em Stat. Modelling}, 12:93--115, 2012.

\bibitem{le2017concentration}
C.~M. Le, E.~Levina, and R.~Vershynin.
\newblock Concentration and regularization of random graphs.
\newblock {\em Random Structures \& Algorithms}, 2017.

\bibitem{Lei.goodness.of.fit2014}
J.~Lei.
\newblock A goodness-of-fit test for stochastic block models.
\newblock {\em The Annals of Statistics}, 44(1):401--424, 2016.

\bibitem{Li&Levina&Zhu.NCV2016}
T.~Li, E.~Levina, and J.~Zhu.
\newblock Network cross-validation by edge sampling.
\newblock {\em arXiv:1612.04717}, 2016.

\bibitem{Lusseau2003}
D.~Lusseau, K.~Schneider, O.~J. Boisseau, P.~Haase, E.~Slooten, and S.~M.
  Dawson.
\newblock The bottlenose dolphin community of doubtful sound features a large
  propor- tion of long-lasting associations. can geographic isola- tion explain
  this unique trait?
\newblock {\em Behavioral Ecology and Sociobiology}, 54:396--405, 2003.

\bibitem{Ma&Su&Zhang2018}
S.~Ma, L.~Su, and Y.~Zhang.
\newblock Determining the number of communities in degree-corrected stochastic
  block models.
\newblock {\em arXiv:1809.01028}, 2018.

\bibitem{Massoulie:2014:CDT:2591796.2591857}
L.~Massouli{\'e}.
\newblock Community detection thresholds and the weak ramanujan property.
\newblock In {\em Proceedings of the Forty-sixth Annual ACM Symposium on Theory
  of Computing}, STOC '14, pages 694--703. ACM, 2014.

\bibitem{McS01}
McSherry.
\newblock Spectral partitioning of random graphs.
\newblock {\em Proc. 42nd FOCS}, pages 529--537, 2001.

\bibitem{Mossel.et.al.2012}
E.~Mossel, J.~Neeman, and A.~Sly.
\newblock Stochastic block models and reconstruction.
\newblock arXiv:1202.1499, 2012.

\bibitem{Mossel&Neeman&Sly2014a}
E.~Mossel, J.~Neeman, and A.~Sly.
\newblock Reconstruction and estimation in the planted partition model.
\newblock {\em Probability Theory and Related Fields},
  DOI:10.1007/s00440-014-0576-6, 2014.

\bibitem{Mossel&Neeman&Sly2014}
E.~Mossel, J.~Neeman, and A.~Sly.
\newblock A proof of the block model threshold conjecture.
\newblock {\em Combinatorica}, 38(3):665--708, 2018.

\bibitem{Newman2006}
M.~E.~J. Newman.
\newblock Finding community structure in networks using the eigenvectors of
  matrices.
\newblock {\em Physical Review E}, 74(3):036104, 2006.

\bibitem{NewmanPNAS}
M.~E.~J. Newman.
\newblock Modularity and community structure in networks.
\newblock {\em Proc. Natl. Acad. Sci. USA}, 103(23):8577--8582, 2006.

\bibitem{Newman&Girvan2004}
M.~E.~J. Newman and M.~Girvan.
\newblock Finding and evaluating community structure in networks.
\newblock {\em Physical Review E}, 69(2):026113, 2004.

\bibitem{Peixoto2013}
T.~P. Peixoto.
\newblock Parsimonious module inference in large networks.
\newblock {\em Phys. Rev. Lett.}, 110:148701, 2013.

\bibitem{Riolo&Cantwell&Reinert&Newman2017}
M.~A. Riolo, G.~T. Cantwell, G.~Reinert, and M.~E.~J. Newman.
\newblock Efficient method for estimating the number of communities in a
  network.
\newblock {\em PHYSICAL REVIEW E}, 96:032310, 2017.

\bibitem{Rohe2011}
K.~Rohe, S.~Chatterjee, and B.~Yu.
\newblock Spectral clustering and the high-dimensional stochastic block model.
\newblock {\em Annals of Statistics}, 39(4):1878–--1915, 2011.

\bibitem{Saade&Krzakala&Zdeborova2014}
A.~Saade, F.~Krzakala, and L.~Zdeborov\'{a}.
\newblock Spectral clustering of graphs with the {B}ethe {H}essian.
\newblock {\em Advances in Neural Information Processing Systems 27}, pages
  406--414, 2014.

\bibitem{Saldana&Yu&Feng2014}
D.~F. Saldana, Y.~Yu, and Y.~Feng.
\newblock How many communities are there?
\newblock {\em Journal of Computational and Graphical Statistics},
  26(1):171--181, 2017.

\bibitem{VanVuRanDiscMat2008}
V.~Vu.
\newblock Random discrete matrices.
\newblock {\em Horizons of Combinatorics}, pages 257--280, 2008.

\bibitem{Vu2014}
V.~Vu.
\newblock A simple {SVD} algorithm for finding hidden partitions.
\newblock {\em Combinatorics, Probability and Computing}, 27(1):124--140, 2018.

\bibitem{Wang&Wood2017}
K.~Wang and P.~M. Wood.
\newblock Limiting empirical spectral distribution for the non-backtracking
  matrix of an {Erdos-Renyi} random graph.
\newblock {\em arXiv:1710.11015}, 2017.

\bibitem{Wang&Bickel2015}
R.~Wang and P.~Bickel.
\newblock Likelihood-based model selection for stochastic block models.
\newblock {\em Ann. Statist.}, 45(2):500--528, 2017.

\bibitem{Yan&Sarkar&Cheng2018}
B.~Yan, P.~Sarkar, and X.~Cheng.
\newblock Provable estimation of the number of blocks in block models.
\newblock {\em Proceedings of Machine Learning Research}, 84:1185--1194, 2018.

\bibitem{Zachary1977}
W.~W. Zachary.
\newblock An information flow model for conflict and fission in small groups.
\newblock {\em Journal of Anthropological Research}, 33:452--473, 1977.

\end{thebibliography}
\bibliographystyle{abbrv}
\end{document}